\newcommand{\newref}[2][]{\hyperref[#2]{#1~\ref*{#2}}}
\renewcommand{\eqref}[1]{\hyperref[#1]{(\ref*{#1})}}
\newcommand{\vect}[1]{\ensuremath{\mathbf{#1}}}
\newcommand{\argmin}{\mathop{\rm argmin}}
\newcommand{\iprod}[2]{\langle #1, #2 \rangle}
\newcommand{\abs}[1]{\left|{#1}\right|}
\newcommand{\norm}[1]{\|{#1} \|}
\newcommand{\oracle}[2]{\mathcal{O}_{#1}\left(#2\right)}
\newcommand{\E}[1]{\mathbb{E}\left[#1\right]}
\newcommand{\Eover}[2]{\mathbb{E}_{#1}\left[#2\right]}
\renewcommand{\Pr}{\mathbb{P}}
\newcommand{\e}{\vect{e}}
\newcommand{\x}{\vect{x}}
\newcommand{\bx}{\bar{\vect{x}}}
\newcommand{\y}{\vect{y}}
\begin{document}

% \begin{center} 
% %{\LARGE{\bf{Optimal rates for Online Non-Convex Learning with FTPL}}}
% {\LARGE{\bf{Online Non-Convex Learning: Following the Perturbed Leader is Optimal}}}
% \vspace*{.3in}

% {\large{
% \begin{tabular}{ccccc}
% Arun Sai Suggala &Praneeth Netrapalli\\% & Pradeep Ravikumar\\
% Carnegie Mellon University & Microsoft Research, India \\%& Carnegie Mellon University\\
% \texttt{asuggala@cs.cmu.edu}  & \texttt{praneeth@microsoft.com} %& \texttt{pradeepr@cs.cmu.edu}
% \end{tabular}
% }}
\title{Online Non-Convex Learning: Following the Perturbed Leader is Optimal}

\author{\name Arun Sai Suggala \email asuggala@cs.cmu.edu \\
       \addr Machine Learning Department\\
       Carnegie Mellon University
       \AND
       \name Praneeth Netrapalli \email praneeth@microsoft.com \\
       \addr Microsoft Research, India}

\editor{}

\maketitle

\begin{abstract}
We study the problem of online learning with non-convex losses, where the learner has access to an offline optimization oracle. We show that the classical Follow the Perturbed Leader (FTPL) algorithm achieves optimal regret rate of $O(T^{-1/2})$ in this setting. This improves upon the previous best-known regret rate of $O(T^{-1/3})$ for FTPL. We further show that an optimistic variant of FTPL achieves better regret bounds when the sequence of losses encountered by the learner is ``predictable''.
\end{abstract}
\begin{keywords}
  Online Learning,  Non-Convex Losses, Perturbation
\end{keywords}

\section{Introduction}
\label{sec:intro}
In this work, we study the problem of online learning with non-convex losses, where, in each iteration, the learner chooses an action and observes a loss which could potentially be non-convex. The goal of the learner is to choose a sequence of actions which minimize the cumulative loss suffered over the course of learning. The paradigm of online learning has been studied in a number of fields, including game theory, machine learning, statistics and has several practical applications. In recent years a number of efficient algorithms have been developed for online learning. Convexity of the loss functions has played a central role in the development of many of these techniques. In this work, we consider a more general setting, where the sequence of loss functions encountered by the learner could be non-convex. Such a setting has numerous applications in machine learning, especially in adversarial training~\citep{szegedy2013intriguing}, robust optimization and training of Generative Adversarial Networks (GANs)~\citep{goodfellow2014generative}.

As mentioned above, most of the existing works on online optimization have focused on convex loss functions \citep{hazan2016introduction}. A number of computationally efficient approaches have been proposed for regret minimization in this setting. However, when the losses are non-convex, minimizing the regret is computationally hard. Recent works on learning with non-convex losses get over this computational barrier by either working with a restricted class of loss functions such as approximately convex losses~\citep{gao2018online} or by optimizing a computationally tractable notion of regret~\citep{hazan2017efficient}. Consequently, the techniques studied in these papers do not guarantee vanishing regret for general non-convex losses. Another class of approaches consider general non-convex losses, but assume access to a sampling oracle~\citep{maillard2010online, krichene2015hedge} or an offline optimization oracle~\citep{gonen2018learning}. Of these, assuming access to an offline optimization oracle is reasonable, given that in practice, simple heuristics such as stochastic gradient descent seem to be able to find approximate global optima reasonably fast even for complicated tasks such as training deep neural networks.%that are used in learning complex models such as neural networks converge to a global optimum, reasonably fast.

In a recent work \citet{gonen2018learning} take this later approach, where they assume access to an offline optimization oracle, and show that the classical Follow the Perturbed Leader (FTPL) algorithm achieves $O(T^{-1/3})$ regret for general non-convex losses which are Lipschitz continuous. In this work, we improve upon this result and show that FTPL in fact achieves optimal $O(T^{-1/2})$ regret.

\section{Problem Setup and Main Results}
\label{sec:setup}
Let $\mathcal{X} \subseteq \mathbb{R}^d$ denote the set of all possible moves of the learner. 
In the online learning framework, on each round $t$, the learner makes a prediction $\x_t \in \mathcal{X}$ and the nature/adversary simultaneously chooses a loss function $f_t:\mathcal{X} \rightarrow \mathbb{R}$ and observe each others actions. The goal of the learner is to choose a sequence of actions $\{\x_t\}_{t=1}^T$ such that the following notion of regret is small
\[
\frac{1}{T}\sum_{t = 1}^T f_t(\x_t) - \frac{1}{T}\inf_{\x \in \mathcal{X}}\sum_{t=1}^Tf_t(\x).
\]
%When $f_t$'s are convex, many efficient algorithms have been proposed which have vanishing regret.
In this work we assume that $\mathcal{X}$ is bounded and has $\ell_{\infty}$ diameter of $D$, which is defined as \mbox{$D =  \sup_{\x,\y \in \mathcal{X}}\|\x-\y\|_{\infty}$.}
%In this work we assume that $\mathcal{X}$ is bounded and can be enclosed in a hyper-rectangle with edge length $D_i$ along the $i^{th}$ standard basis vector. We define the \emph{effective dimension} of domain $\mathcal{X}$ as $d^* =  \frac{\sum_{i=1}^d  D_i}{\dmax}$, where $\dmax = \max_{i} D_i$. The main results presented in our paper depend on $d^*$ instead of $d$ and hold even when $d= \infty$, for example, in $\ell^1$ space of sequences whose series is absolutely convergent.
Moreover, we assume that the sequence of loss functions $f_t$ chosen by the adversary are L-Lipschitz with respect to $\ell_{1}$ norm, that is, for all $\x,\y \in \mathcal{X},$ \mbox{$\ |f_t(\x)-f_t(\y)| \leq L\|\x-\y\|_1.$} %the space of sequences whose series is absolutely convergent,
\paragraph{Approximate Optimization Oracle.} Our results rely on an offline optimization oracle which takes as input a function $f:\mathcal{X}\rightarrow \mathbb{R}$ and a $d$-dimensional vector $\sigma$ and returns an approximate minimizer of $f(\x)-\iprod{\sigma}{\x}$. An optimization oracle is called ``$(\alpha, \beta)$-approximate optimization oracle'' if it returns $\x^*\in \mathcal{X}$ such that 
\[
f(\x^*) - \iprod{\sigma}{\x^*} \leq \inf_{\x \in \mathcal{X}} f(\x) - \iprod{\sigma}{\x} + \left(\alpha + \beta \|\sigma\|_{1}\right),
\]
%where $\|\sigma\|_{D,1} = \sum_{i=1}^d D_i |\sigma_i|$ is the weighted $\ell_1$ norm.
We denote such an optimization oracle with $\oracle{\alpha, \beta}{f-\sigma}$. 
\paragraph{FTPL.} Given access to an $(\alpha, \beta)$-approximate offline optimization oracle, we study the FTPL algorithm which is described by the following prediction rule (see Algorithm~\ref{alg:ftpl}).
\begin{equation}
\label{eqn:ftpl_pred}
\x_t = \oracle{\alpha, \beta}{\sum_{i = 1}^{t-1}f_i-\sigma_t},%\in \argmin_{\x \in \mathcal{X}} \sum_{i = 1}^{t-1}f_i(\x) - \iprod{\sigma_t}{\x},
\end{equation}
%$\{\sigma_{t,j}\}_{j = 1}^d \stackrel{i.i.d}{\sim} \text{Exp}(\eta)$
where $\sigma_t \in \mathbb{R}^d$ is a random perturbation such that $\sigma_{t,j}$, the $j^{th}$ coordiante of $\sigma_t$, is sampled from $\text{Exp}(\eta)$, the exponential distribution with parameter $\eta$\footnote{Recall, $Z$ is an exponential random variable with parameter $\eta$ if  $P(Z\geq s) =\exp(-\eta s)$}.
\begin{algorithm}[tbh]
\caption{Follow the Perturbed Leader (FTPL)}
\label{alg:ftpl}
\begin{algorithmic}[1]
  \small
  \State \textbf{Input:}   Parameter of exponential distribution $\eta$, approximate optimization oracle $\mathcal{O}_{\alpha, \beta}$
  \For{$t = 1 \dots T$}
  \State Generate random vector $\sigma_t$ such that $\{\sigma_{t,j}\}_{j = 1}^d \stackrel{i.i.d}{\sim} \text{Exp}(\eta)$
  \State Predict $\x_t$ as
  \[
  \x_t = \oracle{\alpha, \beta}{\sum_{i = 1}^{t-1}f_i-\sigma_t}.%\in \argmin_{\x\in\mathcal{X}} \sum_{i = 1}^{t-1}f_i(\x) - \iprod{\sigma_t}{\x}.
  \]
  \State Observe loss function $f_t$
  \EndFor
\end{algorithmic}
\end{algorithm}
\paragraph{Optimistic FTPL (OFTPL).} In the general online learning setting considered above, we assumed that the loss functions could possibly be chosen in an adversarial manner by nature. However, in certain applications, the loss functions may not be adversarial. Instead, they might have some patterns and could be predictable. In such cases, \citet{rakhlin2012online} present algorithms for online linear optimization which can exploit the predictability of losses to obtain better regret bounds. We show that the techniques of \citet{rakhlin2012online} can be extended to the online non-convex optimization setting considered in this work.

Let $g_t[f_1\dots f_{t-1}]$ be our guess of the loss $f_t$ at the beginning of round $t$, with $g_1 = 0$.  To simplify the notation, in the sequel, we suppress the dependence of $g_t$ on $\{f_{i}\}_{i=1}^{t-1}$. Some potential choices for $g_t$ that could be of interest are $f_{t-1}$, $\frac{1}{t-1}\sum_{i=1}^{t-1} f_i$. For a thorough discussion on the choices of $g_t$ and concrete examples where predictable loss functions arise, we refer the reader to ~\citet{rakhlin2012online,rakhlin2013optimization}. Given $g_t$, we predict $\x_t$ in OFTPL as
\begin{equation}
\label{eqn:oftpl_pred}
\x_t =\oracle{\alpha, \beta}{\sum_{i = 1}^{t-1}f_i + g_t-\sigma_t}%\in \argmin_{\x \in \mathcal{X}} \sum_{i = 1}^{t-1}f_i(\x) + g_t(\x) - \iprod{\sigma_t}{\x}.
\end{equation}
When our guess $g_t$ is close to $f_t$ we expect OFTPL to have a smaller regret. In Theorem~\ref{thm:oftpl} we show that the regret of OFTPL depends only on $(f_t-g_t)$. 
\subsection{Main Results}
We present our main results for an oblivious adversary who fixes the sequence of losses $\{f_t\}_{t=1}^T$ ahead of the game. Following \cite{hutter2005adaptive, cesa2006prediction}, one can show that any algorithm that is guaranteed to work against an oblivious adversary  also works for a non-oblivious adversary, whose actions are allowed to depend on the past predictions of the algorithm. For the sake of completeness, we present a proof of this reduction from non-oblivious to oblivious adversary model in Appendix~\ref{sec:apx_reduction}. 
%When the adversary is oblivious, it suffices to work with a single random vector $\sigma$, instead of generating a new random vector in each iteration.
\begin{theorem}[Non-Convex FTPL]
\label{thm:ftpl}
Let $D$ be the $\ell_{\infty}$ diameter of $\mathcal{X}$. Suppose the losses encountered by the learner are $L$-Lipschitz w.r.t $\ell_{1}$ norm. Moreover, suppose the optimization oracle used by Algorithm~\ref{alg:ftpl}  is a ``$(\alpha, \beta)$-approximate'' optimization oracle. For any fixed $\eta$, the predictions of Algorithm~\ref{alg:ftpl} satisfy the following regret bound
\[
\Eover{}{\frac{1}{T}\sum_{t = 1}^T f_t(\x_t) - \frac{1}{T}\inf_{\x \in \mathcal{X}}\sum_{t=1}^Tf_t(\x)} \leq O\left(\eta d^2D L^2  + \frac{d(\beta T + D)}{\eta T} + \alpha + \beta d L\right).
\]
\end{theorem}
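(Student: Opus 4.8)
The plan is to follow the standard ``be-the-leader'' template for FTPL, but adapted to non-convex losses via the key observation (due to \citet{gonen2018learning}) that FTPL with exponentially distributed perturbations behaves like a smoothed linear optimizer. First I would decompose the regret into two pieces: a \emph{stability} term $\sum_t \E{f_t(\x_t) - f_t(\x_{t+1})}$, and a \emph{be-the-leader} term $\sum_t \E{f_t(\x_{t+1})} - \inf_{\x}\sum_t f_t(\x)$, plus the error terms $\alpha, \beta$ coming from the oracle approximation. The be-the-leader term is handled, as usual, by the ``follow-the-leader / be-the-leader'' inequality: if $\x_{t+1}$ were the exact minimizer of $\sum_{i\le t} f_i - \sigma$ for a \emph{fixed} perturbation $\sigma$, then $\sum_t f_t(\x_{t+1}) \le \inf_{\x}\sum_{t} f_t(\x) - \iprod{\sigma}{\x} + \iprod{\sigma}{\x_1}$ by induction; taking expectations over $\sigma \sim \mathrm{Exp}(\eta)^{\otimes d}$ contributes a term of order $\E{\|\sigma\|_1} \cdot D \asymp dD/\eta$, and the oracle slack adds $O(T(\alpha + \beta\,\E{\|\sigma\|_1})) = O(T\alpha + \beta d T/\eta)$, which after dividing by $T$ gives the $\alpha$, $dD/(\eta T)$, and $d\beta/\eta$ contributions. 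One subtlety is that the perturbation $\sigma_t$ is \emph{resampled} each round, so I would either (i) argue via a coupling that using a fixed $\sigma$ throughout costs nothing in expectation since each $\x_t$ depends only on $\sigma_t$ which is i.i.d., or (ii) directly invoke the reduction in \citet{gonen2018learning}.

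The heart of the proof is bounding the stability term $\E{f_t(\x_t) - f_t(\x_{t+1})} \le L\,\E{\|\x_t - \x_{t+1}\|_1}$, using $L$-Lipschitzness in $\ell_1$. Here $\x_t = \cO_{\alpha,\beta}(F_{t-1} - \sigma_t)$ and $\x_{t+1} = \cO_{\alpha,\beta}(F_{t-1} + f_t - \sigma_{t+1})$ where $F_{t-1} = \sum_{i<t} f_i$. I would couple $\sigma_t$ and $\sigma_{t+1}$ to be the same random vector $\sigma$, so that the only difference between the two oracle calls is the extra loss $f_t$, whose ``gradient'' has $\ell_\infty$-magnitude at most $L$. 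The plan then is to show that the map $\sigma \mapsto \E_{\text{coords}}[\x(\sigma)]$, i.e. the expected output of the perturbed-leader as a function of a \emph{linear} tilt direction, is Lipschitz/smooth: adding exponential noise with parameter $\eta$ to each coordinate smooths the (otherwise discontinuous) argmin so that a perturbation of the objective by a vector of $\ell_\infty$-norm $L$ moves the expected prediction by $O(\eta d D L)$ in each coordinate, hence $O(\eta d^2 D L)$ in $\ell_1$. Combined with the $L$-Lipschitz constant this yields the $\eta d^2 D L^2$ term after summing over $t$ and dividing by $T$. I expect this smoothness estimate to be the main obstacle: one must carefully exploit the memorylessness / bounded-density ($\le \eta$) property of the exponential distribution, integrate over the region of $\sigma$ for which the two argmins differ, and control the displacement by $D$ on that region. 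This is precisely where the improvement from $T^{-1/3}$ to $T^{-1/2}$ comes from --- a tighter (linear rather than square-root) dependence of the perturbation stability on the loss increment, presumably obtained by a smarter argument than in \citet{gonen2018learning} (e.g.\ bounding $\E{\|\x_t - \x_{t+1}\|_1}$ directly via a one-dimensional integral per coordinate rather than through a union bound).

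Finally I would collect terms: the stability sum gives $O(\eta d^2 D L^2)$ per round; the be-the-leader term gives $O\big(\tfrac{d(\beta T + D)}{\eta T}\big)$ after accounting for both the perturbation magnitude $dD/(\eta T)$ and the per-round oracle slack $\beta d/\eta$; and the raw oracle error gives $O(\alpha + \beta d L)$ (the $\beta d L$ piece arising from the $\beta\|\sigma\|_1$-type slack incurred when comparing $\x_t$ against $\x_{t+1}$, since the effective tilt has $\ell_1$-norm of order $dL$ beyond $\sigma$). Adding these produces exactly the claimed bound
\[
O\!\left(\eta d^2 D L^2 + \frac{d(\beta T + D)}{\eta T} + \alpha + \beta d L\right),
\]
and optimizing over $\eta \asymp \sqrt{D/(d L^2 T)}$ (when $\beta$ is small) recovers the advertised $O(T^{-1/2})$ rate. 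The only genuinely delicate point, to reiterate, is the linear-in-$L$ perturbation-stability lemma for the exponential-smoothed argmin; everything else is bookkeeping built on the be-the-leader inequality and Lipschitzness.
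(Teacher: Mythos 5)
Your high-level decomposition is exactly the paper's: relate regret to per-round $\ell_1$-stability via the be-the-leader telescoping (the paper's Lemma~\ref{lem:reg_to_stbl}), use a single coupled perturbation $\sigma$ across rounds in the oblivious model, bound the be-the-leader slack by $\E{\|\sigma\|_1}\cdot D = dD/\eta$, account for the oracle error $T(\alpha + \beta\E{\|\sigma\|_1})$, and then prove that $\E{\|\x_t - \x_{t+1}\|_1} = O(\eta d^2 D L + \text{oracle terms})$. Your bookkeeping of where each term in the bound comes from, and the optimal $\eta \asymp \sqrt{D/(dL^2T)}$, are all correct.

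The gap is that you state the stability lemma as a desideratum but give no mechanism for proving it, and this lemma \emph{is} the theorem. You write that exponential smoothing ``moves the expected prediction by $O(\eta d D L)$ in each coordinate,'' and that one should ``integrate over the region where the argmins differ,'' but you never explain why such an estimate holds, and the naive version of it runs into a genuine obstruction: $\x_t$ and $\x_{t+1}$ minimize \emph{different} non-convex objectives (differing by $f_t$), so there is no reason a priori why the two argmins should track each other coordinate-by-coordinate. In $d=1$ one can compare them via a single monotonicity argument, but in higher dimensions the perturbation $f_t$ couples all coordinates of the argmin, and a per-coordinate integral alone does not close the loop. The paper's proof handles this with three devices you do not arrive at: (i) a first monotonicity lemma (Lemma~\ref{lem:monotone1}) showing $\x_{t,i}(\sigma + c\e_i) \gtrsim \x_{t,i}(\sigma)$; (ii) a second, more delicate monotonicity lemma (Lemma~\ref{lem:monotone2}) that compares $\x_t$ and $\x_{t+1}$ \emph{simultaneously} under a shift $\sigma \to \sigma + 100Ld\,\e_i$, proving $\min(\x_{t,i}(\sigma'),\x_{t+1,i}(\sigma'))$ nearly exceeds $\max(\x_{t,i}(\sigma),\x_{t+1,i}(\sigma))$ --- this is what forces the two predictions to ``cross'' and hence be close for large $\sigma_i$, and it is only valid on the event $\mathcal{E} = \{\|\x_t - \x_{t+1}\|_1 \le 10d\,|\x_{t,i}-\x_{t+1,i}|\}$; and (iii) a conditional-expectation/event-splitting argument that takes $\Eover{-i}{\cdot}$ fixing $\{\sigma_j\}_{j\ne i}$, splits on $\sigma_i \gtrless 100Ld$ and on $\mathcal{E}$ vs.\ $\mathcal{E}^c$, and yields a self-referential inequality $\E{|\x_{t,i}-\x_{t+1,i}|} \le \frac{1}{9d}\E{\|\x_t - \x_{t+1}\|_1} + O(\eta L d D) + \text{oracle terms}$ that is then summed over $i$ and solved. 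Without these ideas --- in particular the joint monotonicity under the large coordinate shift $100Ld$ and the event $\mathcal{E}$ to control the cross-coordinate contributions on $\mathcal{E}^c$ --- your plan does not yield the claimed linear-in-$L$ stability bound, which is precisely the source of the improvement from $T^{-1/3}$ to $T^{-1/2}$.
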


\begin{theorem}[Non-Convex OFTPL]
\label{thm:oftpl}
Let $D$ be the $\ell_{\infty}$ diameter of $\mathcal{X}$. Suppose our guess $g_t$ is such that $(g_t-f_t)$ is $L_t$-Lipschitz w.r.t $\ell_{1}$ norm, for all $t \in [T]$. For any fixed $\eta$, OFTPL with access to a ``$(\alpha, \beta)$-approximate'' optimization oracle satisfies the following regret bound
\[
\Eover{}{\frac{1}{T}\sum_{t = 1}^T f_t(\x_t) - \frac{1}{T}\inf_{\x \in \mathcal{X}}\sum_{t=1}^Tf_t(\x)} \leq O\left(\eta d^2D \sum_{t = 1}^T\frac{L_t^2}{T} + \frac{d(\beta T + D)}{\eta T} +\alpha + \beta d \sum_{t=1}^T\frac{L_t}{T}\right).
\]
\end{theorem}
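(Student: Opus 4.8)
\emph{Plan.} I would mirror the proof of Theorem~\ref{thm:ftpl}, with the loss $f_t$ in the ``stability'' step replaced throughout by the prediction error $f_t-g_t$. Since the adversary is oblivious, $\E{f_t(\x_t)}$ depends only on the marginal law of $\x_t$, so for the purpose of bounding expected regret I may assume a single perturbation $\sigma$ with i.i.d.\ $\text{Exp}(\eta)$ coordinates is drawn once and reused in every round, i.e.\ $\x_t=\oracle{\alpha,\beta}{\sum_{i<t}f_i+g_t-\sigma}$ with a common $\sigma$; the non-oblivious case then follows from the reduction in Appendix~\ref{sec:apx_reduction}.

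\emph{Optimistic be-the-leader decomposition.} The OFTPL iterates are exactly the regularized-leader iterates for the \emph{modified} loss sequence $\bar f_t\defeq f_t-g_t+g_{t+1}$ with regularizer $-\iprod{\sigma}{\cdot}$, since $\sum_{i=0}^{t-1}\bar f_i=\sum_{i<t}f_i+g_t-\iprod{\sigma}{\cdot}$ (using $g_1=0$). Applying the standard ``be the leader'' inequality to $\{\bar f_t\}$, charging one oracle error $\alpha+\beta\|\sigma\|_1$ per round, telescoping the $g$-terms (with $g_{T+1}\defeq 0$), and bounding $\iprod{\sigma}{\x_1-\x^\star}\le D\|\sigma\|_1$ for the comparator $\x^\star$, I obtain
\[
\sum_{t=1}^T f_t(\x_t)-\inf_{\x\in\mathcal{X}}\sum_{t=1}^T f_t(\x)\ \le\ \sum_{t=1}^T\big[(f_t-g_t)(\x_t)-(f_t-g_t)(\x_{t+1})\big]\ +\ D\|\sigma\|_1\ +\ O\big(T(\alpha+\beta\|\sigma\|_1)\big).
\]
In expectation the last two terms are $O\big(dD/\eta+T\alpha+\beta dT/\eta\big)$, which after dividing by $T$ is exactly the $\tfrac{d(\beta T+D)}{\eta T}+\alpha$ part of the claimed bound.

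\emph{Stability step.} It remains to bound $\E{(f_t-g_t)(\x_t)-(f_t-g_t)(\x_{t+1})}$. Since $f_t-g_t$ is $L_t$-Lipschitz w.r.t.\ $\ell_1$, this is at most $L_t\,\E\|\x_t-\x_{t+1}\|_1$, and — once the change of predictor is absorbed through the one-step-ahead leader $\oracle{\alpha,\beta}{\sum_{i\le t}f_i-\sigma}$ — the two iterates are approximate minimizers of cumulative functions differing by the $L_t$-Lipschitz perturbation $f_t-g_t$. The whole theorem therefore reduces to a stability lemma for exponentially perturbed leaders: if $\Psi'=\Psi+\delta$ with $\delta$ $\ell$-Lipschitz w.r.t.\ $\ell_1$, then
\[
\Eover{\sigma}{\big\|\oracle{\alpha,\beta}{\Psi-\sigma}-\oracle{\alpha,\beta}{\Psi'-\sigma}\big\|_1}\ \le\ O\big(\eta d^2 D\,\ell+\beta d\big).
\]
Granting this lemma, $\E{(f_t-g_t)(\x_t)-(f_t-g_t)(\x_{t+1})}\le O(\eta d^2 D L_t^2+\beta d L_t)$, and summing over $t$ and dividing by $T$ produces the remaining $\eta d^2 D\sum_t L_t^2/T$ and $\beta d\sum_t L_t/T$ terms.

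\emph{Proof of the lemma, and the main obstacle.} To prove the lemma I would use the convex-conjugate representation: the exact minimizer of $\Psi(\x)-\iprod{\sigma}{\x}$ is a subgradient (in $\sigma$) of $\Psi^\ast(\sigma)=\max_{\x\in\mathcal{X}}\{\iprod{\sigma}{\x}-\Psi(\x)\}$, which is convex in $\sigma$ even though $\Psi$ need not be, and $\Psi^\ast,(\Psi')^\ast$ differ uniformly by at most $\sup\delta-\inf\delta\le\ell d D$. Bounding the expected $\ell_1$-distance coordinate by coordinate, I would freeze all but one coordinate of $\sigma$: both (sub)gradient fields become nondecreasing in the free variable $s$ (monotonicity of the subdifferential along a line), are derivatives of one-dimensional convex functions staying within $\ell d D$ of each other, and an integration-by-parts estimate of the difference of these derivatives against the $\text{Exp}(\eta)$ density yields $O(\eta\,\ell d D)$ per coordinate, hence $O(\eta d^2 D\ell)$ overall; the $O(\beta d)$ term absorbs the slack of the $(\alpha,\beta)$-approximate oracle. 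I expect the real work to be exactly this last estimate — turning ``two nearby convex functions have nearby derivatives, on average under the exponential density'' into the clean linear-in-$\eta$ bound — together with the bookkeeping that keeps only $f_t-g_t$ (and not $f_t$) in the stability term; this is the step that upgrades the earlier $O(T^{-1/3})$ rate to the optimal $O(T^{-1/2})$.
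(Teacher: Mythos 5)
Your ``optimistic be-the-leader'' decomposition is correct: applying FTPL on the modified sequence $\bar f_t\defeq f_t-g_t+g_{t+1}$ and telescoping the $g$-terms does give
$\sum_t f_t(\x_t)-\inf_\x\sum_t f_t(\x)\le\sum_t\big[(f_t-g_t)(\x_t)-(f_t-g_t)(\x_{t+1})\big]+\iprod{\sigma}{\x_2-\x^*}+\gamma(\sigma)T$.
The gap is in the stability step. You bound the stability term by $L_t\E\norm{\x_t-\x_{t+1}}_1$, where $\x_{t+1}$ is the next \emph{OFTPL} iterate. But the objectives defining $\x_t$ and $\x_{t+1}$ differ by $f_t-g_t+g_{t+1}$, not by $f_t-g_t$: the $g_{t+1}$ perturbation is present as well, and $g_{t+1}$ has Lipschitz constant on the order of $L$ (the Lipschitz constant of the $f$'s), not $L_{t+1}$. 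Inserting the one-step-ahead leader $\bx_{t+1}\in\argmin_\x\sum_{i\le t}f_i(\x)-\iprod{\sigma}{\x}$ as an intermediate, as you suggest, splits $\norm{\x_t-\x_{t+1}}_1\le\norm{\x_t-\bx_{t+1}}_1+\norm{\bx_{t+1}-\x_{t+1}}_1$. The first piece is controlled by $L_t$ as you want, but the second is a stability between $\bx_{t+1}$ and $\x_{t+1}$ whose objectives differ by $g_{t+1}$, so your own stability lemma gives $\E\norm{\bx_{t+1}-\x_{t+1}}_1=O(\eta d^2 D\,L[g_{t+1}]+\beta d)$. Multiplying by $L_t$ and summing leaves an extra $\eta d^2 D\sum_t L_t\,L[g_{t+1}]$ term, which is \emph{not} absorbed into $\eta d^2 D\sum_t L_t^2$ (e.g.\ with $g_t=f_{t-1}$ one can have $L_t\ll L=L[g_{t+1}]$, and then $\sum L_t L\gg\sum L_t^2$). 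So the route through $\norm{\x_t-\x_{t+1}}_1$ does not recover the stated rate.

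The paper's Lemma~\ref{lem:reg_to_stbl_oftpl} sidesteps this entirely by producing the stability term directly in the form $L_t\E\norm{\x_t-\bx_{t+1}}_1$, with $\bx_{t+1}$ the \emph{exact} leader for $\sum_{i\le t}f_i-\iprod{\sigma}{\cdot}$. The decomposition used there is $\sum_t\big[\Delta_t(\x_t)-\Delta_t(\bx_{t+1})\big]+\sum_t\big[g_t(\x_t)-g_t(\bx_{t+1})\big]+\sum_t\big[f_t(\bx_{t+1})-f_t(\x^*)\big]$ with $\Delta_t=f_t-g_t$, and an induction (using approximate optimality of $\x_t$ for $\sum_{i<t}f_i+g_t-\iprod{\sigma}{\cdot}$ and exact optimality of $\bx_{t+1}$) shows the last two sums are $\le\iprod{\sigma}{\bx_2-\x^*}+\gamma(\sigma)(T-1)$. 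The remaining term is $\sum_t\Delta_t(\x_t)-\Delta_t(\bx_{t+1})\le\sum_t L_t\norm{\x_t-\bx_{t+1}}_1$, and since the objectives of $\x_t$ and $\bx_{t+1}$ differ by exactly the $L_t$-Lipschitz $\Delta_t$, the monotonicity/stability argument of Theorem~\ref{thm:ftpl} applies verbatim with $L$ replaced by $L_t$. Your convex-conjugate reformulation of the stability lemma is a reasonable way to obtain the monotonicity properties (and is morally what Lemmas~\ref{lem:monotone1} and~\ref{lem:monotone2} establish), but it must be applied to the pair $(\x_t,\bx_{t+1})$, not $(\x_t,\x_{t+1})$, and the $(\alpha,\beta)$ oracle error and the coordinate-by-coordinate conditional-expectation bookkeeping still need to be carried out as in Section~\ref{sec:ftpl}.
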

\noindent The above result shows that for appropriate choice of $\eta$, FTPL achieves \mbox{$O(d^{\frac{3}{2}}T^{-\frac{1}{2}} + \alpha + \beta d^{\frac{3}{2}} T^{\frac{1}{2}})$} regret. This also shows that when $\alpha = O(T^{-\frac{1}{2}}), \beta = O(T^{-1})$, FTPL achieves the optimal $O(T^{-\frac{1}{2}})$ regret. This improves upon the $O(T^{-\frac{1}{3}})$ regret bound obtained by~\citet{gonen2018learning}.
We note that the above results can be generalized to infinite-dimensional spaces such as $\ell^1$ space of sequences. To do this we assume that the domain $\mathcal{X}$ is bounded and can be enclosed in a hyper-rectangle with edge length $D_i$ along the $i^{th}$ standard basis vector. Through a more careful analysis we can obtain regret bounds that depend on  the \emph{effective dimension} of $\mathcal{X}$, which is defined as $\frac{\sum_{i=1}^{d}  D_i}{\max_{i} D_i}$, instead of $d$. 

Before we conclude the section we point out that as an immediate consequence of the above regret bounds, we obtain algorithms for approximating the mixed strategy Nash equilibria of general non-convex non-concave saddle point problems of the form $\displaystyle \min_{\x \in \mathcal{X}} \max_{\y \in \mathcal{Y}} M(\x, \y)$. This follows from the observation that saddle point problems can be solved by playing two online optimization algorithms against each other~\citep{cesa2006prediction, hazan2016introduction}.

% \begin{corollary}[Two Player Zero Sum Games]

% \end{corollary}

\section{Background}
\label{sec:background}
In this section we briefly review the relevant literature on online learning in both convex and non-convex settings.
\paragraph{Online Convex Optimization.} When the domain $\mathcal{X}$ and the loss functions $f_t$ encountered by the learner are convex, a number of efficient algorithms for regret minimization have been studied.  Most of these algorithms fall into three broad categories, namely Follow the Regularized Leader (FTRL), Online Mirror Descent (OMD)~\citep{hazan2016introduction} and Follow the Perturbed Leader (FTPL)~\citep{kalai2016efficient}. FTRL algorithms make a prediction in each iteration by minimizing  $\argmin_{\x} \sum_{i=1}^{t-1} f_i(\x) + R(\x)$, where $R$ is a strongly convex regularizer. The regularization $R$ plays a crucial role in the performance of the algorithm and helps avoid overfitting to the observed loss functions. Similar to FTRL, OMD also relies on explicit regularization to guarantee vanishing regret. In fact, under certain settings, both OMD and FTRL algorithms are known to be equivalent~\citep{mcmahan2011follow}. For a broad class of online convex optimization problems, FTRL and OMD are known to achieve optimal regret guarantees.

FTPL algorithms rely on random perturbation of loss functions to guarantee vanishing regret. This random perturbation can be viewed as having a similar role as the explicit regularization used in FTRL and OMD. In a recent work \citet{abernethy2016perturbation} use duality to connect FTPL and FTRL. They show that every instance of FTPL is also an instance of FTRL.

\paragraph{Online Non-Convex Optimization.} A natural question that arises in the context of online non-convex learning is whether there exist counterparts of FTRL and OMD which achieve vanishing regret. Unfortunately, the answer is no. As we show in the following Proposition, there exists no deterministic algorithm that can achieve vanishing regret when the losses are non-convex.
\begin{proposition}
\label{prop:no_deterministic}
No deterministic algorithm can achieve $o(1)$ regret in the setting of online non-convex learning. 
\end{proposition}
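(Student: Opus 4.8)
The plan is to prove the lower bound by exhibiting a single adversarial instance --- a choice of domain, Lipschitz constant, and loss sequence --- on which \emph{every} deterministic learner suffers average regret at least a fixed positive constant, hence never $o(1)$. The crucial observation that makes this work is that against a deterministic algorithm the oblivious and non-oblivious adversary models coincide: since each $\x_t$ is a fixed function of $f_1,\dots,f_{t-1}$, the adversary can simulate the learner offline and commit to the entire loss sequence in advance, so it is legitimate (even in the oblivious model used in the paper) to think of the adversary as reactively ``punishing'' the learner's move on each round. This is essentially the classical two-expert lower bound for online learning, adapted to the present setting.

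Concretely, I would take the simplest admissible domain $\mathcal{X}=\{0,1\}\subset\mathbb{R}$ (so $d=1$ and the $\ell_\infty$-diameter is $D=1$); a two-point set is a perfectly valid bounded domain, and if one prefers a ``nicer'' set the same argument goes through on $\mathcal{X}=[0,1]$ with piecewise-linear losses. Fix an arbitrary deterministic algorithm. On round $t$, once the learner has committed to $\x_t\in\{0,1\}$, let the adversary play the loss $f_t$ defined by $f_t(\x_t)=1$ and $f_t(1-\x_t)=0$. Each such $f_t$ is $1$-Lipschitz with respect to $\ell_1$, since $|f_t(0)-f_t(1)|=1=|0-1|$, so this instance satisfies all the assumptions of the setup in Section~\ref{sec:setup} with $L=1$, $D=1$.

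Then the learner's cumulative loss is exactly $\sum_{t=1}^T f_t(\x_t)=T$. For the comparator I would not even track how often each point is played: I would just bound the infimum by the average over the two candidates, $\inf_{x\in\mathcal{X}}\sum_{t=1}^T f_t(x)\le \tfrac12\sum_{t=1}^T\bigl(f_t(0)+f_t(1)\bigr)=\tfrac12\sum_{t=1}^T 1 = T/2$, using that $f_t(0)+f_t(1)=1$ for every $t$. Dividing by $T$, the regret of the deterministic algorithm on this instance is at least $1-\tfrac12=\tfrac12$ for every $T$, which is $\Omega(1)$ and in particular not $o(1)$. Since the deterministic algorithm was arbitrary, this establishes the proposition.

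There is no genuine obstacle in this argument; the only points requiring care are (i) verifying that the constructed $f_t$ actually meet the Lipschitz and bounded-domain hypotheses, and (ii) the remark that determinism collapses the oblivious/non-oblivious distinction, so the reactive construction is valid in the oblivious model. If one wanted the lower bound to scale with $d$, $D$, or $L$ (unnecessary for an $\Omega(1)$ claim), one would replicate the two-point gadget along each standard basis direction and rescale; but for the stated result the one-dimensional instance suffices.
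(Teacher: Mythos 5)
Your argument is correct and takes essentially the same route as the paper: a reactive adversary (legitimate in the oblivious model because the learner is deterministic, so the adversary can simulate the learner's predictions in advance) that places the loss maximum at the committed prediction, followed by an averaging bound on the comparator. The only difference is the choice of gadget --- you use the two-point domain $\{0,1\}$ with $0/1$ losses, whereas the paper takes $\mathcal{X}=[-D,D]$ with tent functions $g_a(\x)=\max\{0,D/2-|\x-a|\}$ --- and since the setup imposes no convexity or connectedness on $\mathcal{X}$, both instances are admissible (though note that your parenthetical ``same construction on $[0,1]$'' would in fact require the tent-style losses, since the naive affine extension of your $0/1$ losses lets both learner and comparator achieve $T/2$).
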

The above Proposition shows that only randomized algorithms can achieve vanishing regret. Recent works of \citet{maillard2010online, krichene2015hedge} consider the natural extension of Exponential Weight Algorithm to continuous domains and show that the resulting algorithm has vanishing regret in the setting of online non-convex learning. The algorithms studied in these works rely on an offline sampling oracle which can generate samples from any given probability distribution. In another line of work, ~\citet{gonen2018learning} study the classical FTPL algorithm with access to a certain offline optimization oracle and show that it achieves $O(T^{-1/3})$ regret. As an immediate consequence of this result, the authors show that both online adversarial learning model and statistical learning model are computationally equivalent.
% simple workaround for this problem would be to modify the decision space of the learner from $\mathcal{X}$ to the set of all probability distributions over $\mathcal{X}$ and allow the learner to play a probability distribution in each iteration. This modification effectively converts the original non-convex problem into a infinite dimensional linear problem. This approach was taken in many recent works~\citep{maillard2010online, krichene2015hedge}. However, a major drawback of this approach is that .

\section{Non-Convex FTPL}
\label{sec:ftpl}
In this section, we present a proof of Theorem~\ref{thm:ftpl}. Since we are in the oblivious adversary setting, it suffices to work with a single random vector $\sigma$, instead of generating a new random vector in each iteration.
The first step in the proof involves relating the expected regret to the stability of prediction, which is a standard step in the analysis of many online learning algorithms. 
\begin{lemma}
    \label{lem:reg_to_stbl}
     The regret of Algorithm~\ref{alg:ftpl} can be upper bounded as 
    \begin{equation}
        \label{eqn:stability}
        \E{\sum_{t = 1}^T f_t(\x_t) - \inf_{\x \in \mathcal{X}}\sum_{t=1}^Tf_t(\x)} \leq L\sum_{t=1}^T\underbrace{\E{\norm{\x_t-\x_{t+1}}_1}}_{Stability} +  \frac{d(\beta T+D)}{\eta} + \alpha T.
    \end{equation}
\end{lemma}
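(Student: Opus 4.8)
The plan is to use the standard "be-the-leader / be-the-perturbed-leader" argument, suitably adapted to handle both the random perturbation $\sigma$ and the oracle's $(\alpha,\beta)$-approximation error. Since we are in the oblivious setting, fix a single draw of $\sigma$ with coordinates i.i.d.\ $\mathrm{Exp}(\eta)$, and let $\x_t = \oracle{\alpha,\beta}{\sum_{i=1}^{t-1} f_i - \sigma}$. Introduce the auxiliary "hallucinated" sequence $\x_{T+1} = \oracle{\alpha,\beta}{\sum_{i=1}^{T} f_i - \sigma}$ obtained by running the oracle on the full loss sequence. The first step is to compare $\sum_{t=1}^T f_t(\x_t)$ with $\sum_{t=1}^T f_t(\x_{t+1})$; the second step is to show $\sum_{t=1}^T f_t(\x_{t+1})$ is not much larger than $\inf_{\x}\sum_{t=1}^T f_t(\x)$ by the follow-the-leader telescoping inequality, paying the perturbation and oracle costs; and the third step is to bound the resulting error terms in expectation over $\sigma$.

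For the second step, I would prove by induction (this is the classical "be-the-leader" lemma, in its approximate-oracle form) that for the perturbed cumulative losses $F_t(\x) \defeq \sum_{i=1}^{t} f_i(\x) - \iprod{\sigma}{\x}$, the points $\x_{t+1}$ satisfy $\sum_{t=1}^T f_t(\x_{t+1}) - \iprod{\sigma}{\x_{T+1}} \le F_T(\x_{T+1}) \le \inf_{\x} F_T(\x) + \alpha T + \beta \sum_{t=1}^T \norm{?}$; more precisely, each application of the oracle contributes an additive slack of $\alpha + \beta\norm{\sigma}_1$ in the telescoping, so over the $T$ steps we pick up $\alpha T + \beta T \norm{\sigma}_1$, giving
\[
\sum_{t=1}^T f_t(\x_{t+1}) \le \inf_{\x \in \mathcal{X}} \sum_{t=1}^T f_t(\x) + \iprod{\sigma}{\x_{T+1} - \x^*} + \alpha T + \beta T \norm{\sigma}_1 .
\]
Since $\mathcal{X}$ has $\ell_\infty$-diameter $D$, $\iprod{\sigma}{\x_{T+1}-\x^*} \le D\norm{\sigma}_1$. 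Taking expectations, $\E{\norm{\sigma}_1} = d/\eta$ because each coordinate has mean $1/\eta$, so these terms contribute $\tfrac{d(\beta T + D)}{\eta} + \alpha T$, matching the statement. (The $\beta T\norm{\sigma}_1$ term is exactly why the oracle error $\beta$ enters multiplied by $T$.)

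For the first step, I would telescope $f_t(\x_t) - f_t(\x_{t+1})$ and apply $L$-Lipschitzness w.r.t.\ $\ell_1$: $f_t(\x_t) - f_t(\x_{t+1}) \le L\norm{\x_t - \x_{t+1}}_1$, then take expectations to get the stability sum $L\sum_{t=1}^T \E{\norm{\x_t-\x_{t+1}}_1}$. Combining the two steps gives exactly \eqref{eqn:stability}. The main obstacle — really the only delicate point — is handling the approximate oracle cleanly inside the induction: one must be careful that the slack accrued at each step is $\alpha + \beta\norm{\sigma}_1$ and not something that compounds, and that the inductive hypothesis is stated for the \emph{perturbed} losses $F_t$ (so that the $-\iprod{\sigma}{\cdot}$ term is absorbed consistently) rather than the raw losses; once the bookkeeping is set up correctly, the rest is the textbook FTPL argument plus the elementary fact $\E{\norm{\sigma}_1} = d/\eta$. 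The genuinely hard work — bounding the stability term $\E{\norm{\x_t - \x_{t+1}}_1}$ itself — is deferred to later lemmas and is not needed here.
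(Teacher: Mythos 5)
Your proposal is correct and essentially reproduces the paper's argument: you decompose the regret into a stability term $L\sum_t \E{\|\x_t - \x_{t+1}\|_1}$ plus a follow-the-leader gap $\sum_t [f_t(\x_{t+1}) - f_t(\x^*)]$, bound the former by $L$-Lipschitzness, bound the latter by an approximate be-the-leader induction accruing $\gamma(\sigma) = \alpha + \beta\|\sigma\|_1$ per step, and then control the leftover linear term by $D\|\sigma\|_1$ and use $\E{\|\sigma\|_1} = d/\eta$. One small bookkeeping slip: the boundary term that falls out of the telescoping is $\iprod{\sigma}{\x_2 - \x^*}$ (coming from $F_0(\x_2) = -\iprod{\sigma}{\x_2}$), not $\iprod{\sigma}{\x_{T+1} - \x^*}$ as you wrote, but this is harmless since both are dominated by $D\|\sigma\|_1$ and the resulting bound is unchanged.
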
 %\frac{(1+\log{d})(\beta T + dD)}{\eta} + \alpha T
In the rest of the proof we focus on bounding the stability term $\E{\norm{\x_t-\x_{t+1}}_1}$. The randomness used in the algorithm is crucial for bounding its stability. The more randomness we add, the more stable the algorithm is. However, there is a price we pay for adding randomness. It causes the algorithm to make poor predictions, which leads to worse regret. This is evident in the second term in the upper bound in Equation~\eqref{eqn:stability}, which increases as $\eta$ decreases. 

We first provide an brief sketch of the proof in the $1$-dimensional case. Similar to the proof of \citet{gonen2018learning}, our proof relies on showing certain monotonicity properties of the predictions of the algorithm. Letting $\x_t(\sigma)$ be the prediction in the $t^{th}$ iteration of FTPL with random perturbation $\sigma$, we show that the predictions are monotonic functions of $\sigma$
\[
\forall t,c > 0, \quad \x_t(\sigma + c) \geq \x_t(\sigma).
\]
% \begin{wrapfigure}{r}{0.45\textwidth}
%     \centering
%     \vspace{-0.2in}
%     \includegraphics[scale=0.5, trim={0 0 0 0cm},clip]{}
%     %\caption{\label{fig:torrent_1d_perf}}
% \end{wrapfigure}
Moreover, we show that
\[
\forall c > L,\quad \min\left\lbrace\x_t(\sigma+c), \x_{t+1}(\sigma+c)\right\rbrace \geq \max\left\lbrace\x_t(\sigma), \x_{t+1}(\sigma)\right\rbrace.
\]
Since the domain is bounded, these two properties imply that the functions $\x_t(\sigma), \x_{t+1}(\sigma)$ should be close to each other for sufficiently large values of $\sigma$ (see Figure~\ref{fig:monotonicity_props} for an illustration). The closeness of these two functions immediately implies the stability of the algorithm. In what follows, we formalize this argument and extend it to the high-dimensional case.
\begin{figure}[h]
    \centering
    \includegraphics[scale=0.5, trim={0 0 0 0cm},clip]{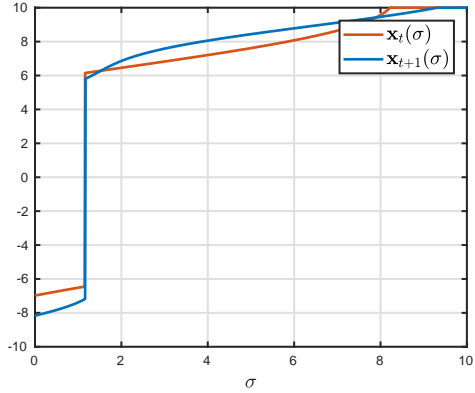}
    \caption{Illustration of monotonicity properties of the predictions of FTPL on a $1$-dimensional example with $D=10, L=2$.}
    \label{fig:monotonicity_props}
\end{figure}
\begin{lemma}[Monotonicity 1]
\label{lem:monotone1}
 Let $\x_t(\sigma)$ be the prediction of FTPL in iteration $t$, with random perturbation $\sigma$. Let $\e_i$ denote the $i^{th}$ standard basis vector and $\x_{t,i}$ denote the $i^{th}$ coordinate of $\x_t$. Then the following monotonicity property holds for any $c> 0$
\[
\x_{t,i}(\sigma+c\e_i) \geq \x_{t,i}(\sigma) - \frac{2(\alpha+\beta\|\sigma\|_{1})}{c} - \beta.
\]
\end{lemma}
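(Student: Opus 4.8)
The plan is to compare the optimization problems that define $\x_t(\sigma)$ and $\x_t(\sigma + c\e_i)$ and show that the perturbed minimizer cannot have its $i$-th coordinate much below that of the unperturbed one. Write $F = \sum_{j=1}^{t-1} f_j$, so that $\x_t(\sigma)$ is an $(\alpha,\beta)$-approximate minimizer of $x \mapsto F(\x) - \iprod{\sigma}{\x}$ and $\x_t(\sigma + c\e_i)$ is an $(\alpha,\beta)$-approximate minimizer of $x \mapsto F(\x) - \iprod{\sigma}{\x} - c\,\x_i$. Abbreviate $\x \defeq \x_t(\sigma)$, $\y \defeq \x_t(\sigma + c\e_i)$, and let $\varepsilon_\sigma \defeq \alpha + \beta\|\sigma\|_1$ and $\varepsilon_{\sigma + c\e_i} \defeq \alpha + \beta\|\sigma + c\e_i\|_1 = \alpha + \beta\|\sigma\|_1 + \beta c$ (using $c,\sigma_i > 0$, since $\sigma$ has nonnegative coordinates a.s.). Then the approximate-optimality of $\y$ tested against the competitor $\x$ gives
\[
F(\y) - \iprod{\sigma}{\y} - c\y_i \;\le\; F(\x) - \iprod{\sigma}{\x} - c\x_i + \varepsilon_{\sigma + c\e_i},
\]
and the approximate-optimality of $\x$ tested against the competitor $\y$ gives
\[
F(\x) - \iprod{\sigma}{\x} \;\le\; F(\y) - \iprod{\sigma}{\y} + \varepsilon_\sigma.
\]

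Adding these two inequalities, the terms $F(\x) - \iprod{\sigma}{\x}$ and $F(\y) - \iprod{\sigma}{\y}$ cancel, leaving
\[
- c\y_i \;\le\; - c\x_i + \varepsilon_{\sigma+c\e_i} + \varepsilon_\sigma
\;=\; - c\x_i + 2\varepsilon_\sigma + \beta c,
\]
and dividing by $c > 0$ yields $\y_i \ge \x_i - \tfrac{2\varepsilon_\sigma}{c} - \beta$, which is exactly $\x_{t,i}(\sigma + c\e_i) \ge \x_{t,i}(\sigma) - \tfrac{2(\alpha + \beta\|\sigma\|_1)}{c} - \beta$. The key mechanism is that perturbing the objective by a positive multiple of the $i$-th coordinate can only push the approximate minimizer's $i$-th coordinate up, and the only slack comes from (a) the additive oracle error, which contributes $2\varepsilon_\sigma/c$ after normalizing by $c$, and (b) the $\|\sigma\|_1$-dependent oracle error increasing by $\beta c$ when we add $c$ to $\sigma_i$, which contributes the flat $\beta$ term.

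The only genuinely delicate point — and the one I would flag as the main thing to get right — is the bookkeeping on the oracle error term $\|\sigma + c\e_i\|_1$: because the perturbations $\sigma_{t,j}$ are drawn from $\mathrm{Exp}(\eta)$ they are nonnegative almost surely, so $\|\sigma + c\e_i\|_1 = \|\sigma\|_1 + c$ holds without an absolute-value subtlety, and the $\beta c$ term is exactly what appears; one must make sure the statement is intended to hold pointwise on the (probability-one) event that $\sigma \ge 0$ coordinatewise, rather than in expectation. Everything else is the two-line "add the two approximate-optimality inequalities and cancel" argument above; there is no need for convexity, differentiability, or uniqueness of minimizers, which is precisely why it works in the non-convex setting.
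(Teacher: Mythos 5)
Your proof is correct and takes essentially the same route as the paper: both invoke approximate optimality of $\x_t(\sigma)$ against the competitor $\x_t(\sigma+c\e_i)$ and of $\x_t(\sigma+c\e_i)$ against $\x_t(\sigma)$, combine the two inequalities so the shared $f_{1:t-1}$ and perturbation terms cancel, and divide by $c$. One small remark: the point you flag as delicate is in fact unconditional, since $\|\sigma+c\e_i\|_1 \le \|\sigma\|_1 + c$ follows from the triangle inequality alone, so there is no need to restrict to the almost-sure event $\sigma \ge 0$.
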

\begin{proof}
Let $f_{1:t}(\x) = \sum_{i = 1}^tf_i(\x)$ and $\sigma' = \sigma+c \e_i$. Moreover, let $\gamma(\sigma) = \alpha + \beta \|\sigma\|_{1}$ be the approximation error of the offline optimization oracle. From the approximate optimality of $\x_t(\sigma)$ we have
\begin{align*}
&f_{1:t-1}(\x_t(\sigma)) - \iprod{\sigma}{\x_t(\sigma)}\\
&\quad \leq f_{1:t-1}(\x_t(\sigma')) - \iprod{\sigma}{\x_t(\sigma')} + \gamma(\sigma)\\
&\quad = f_{1:t-1}(\x_t(\sigma')) - \iprod{\sigma'}{\x_t(\sigma')} + c\x_{t,i}(\sigma') + \gamma(\sigma)\\
&\quad \stackrel{(a)}{\leq} f_{1:t-1}(\x_t(\sigma)) - \iprod{\sigma'}{\x_t(\sigma)} + c\x_{t,i}(\sigma') + \gamma(\sigma) + \gamma(\sigma')\\
&\quad = f_{1:t-1}(\x_t(\sigma)) - \iprod{\sigma}{\x_t(\sigma)} + c\left(\x_{t,i}(\sigma') - \x_{t,i}(\sigma)\right) + \gamma(\sigma) + \gamma(\sigma'),
\end{align*}
where $(a)$ follows from the approximate optimality of $\x_t(\sigma')$. Combining the first and last terms in the above expression, we get \mbox{$\x_{t,i}(\sigma') \geq \x_{t,i}(\sigma) -\frac{2\gamma(\sigma)}{c} - \beta$}.
\end{proof}
\begin{lemma}[Monotonicity 2]
%Let $\x_{t}(\sigma)$ be the prediction in iteration $t$ of FTPL, with random perturbation $\sigma$. Let $\e_i$ denote the $i^{th}$ standard basis vector and $\x_{t,i}$ denote the $i^{th}$ coordinate of $\x_t$.
Let $\x_t(\sigma)$ be the prediction of FTPL in iteration $t$, with random perturbation $\sigma$.
Let $\e_i$ denote the $i^{th}$ standard basis vector and $\x_{t,i}$ denote the $i^{th}$ coordinate of $\x_t$. Suppose \mbox{$\norm{\x_{t}(\sigma) - \x_{t+1}(\sigma)}_1 \leq 10d \cdot |\x_{t,i}(\sigma) - \x_{t+1,i}(\sigma)|$.}   For $\sigma' = \sigma+100Ld \e_i$, we have
    \label{lem:monotone2}
%\begingroup\makeatletter\def\f@size{10}\check@mathfonts
\begin{align*}
    \min\left(\x_{t,i}(\sigma'), \x_{t+1, i}(\sigma')\right) \geq & \ \max\left(\x_{t,i}(\sigma), \x_{t+1,i}(\sigma)\right) - \frac{1}{10}|\x_{t,i}(\sigma)-\x_{t+1,i}(\sigma)|\\
    & - \frac{3(\alpha+\beta\|\sigma\|_{1})}{100Ld} -\beta.
\end{align*}
%\endgroup
\end{lemma}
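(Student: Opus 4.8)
The plan is to derive the bound by combining Lemma~\ref{lem:monotone1}, applied separately to iterations $t$ and $t+1$, with one additional ``cross-iteration'' comparison that uses the fact that the objective defining $\x_{t+1}$ differs from that defining $\x_t$ only by the single $L$-Lipschitz loss $f_t$. Throughout write $c = 100Ld$, $\sigma' = \sigma + c\e_i$, $\gamma(\sigma) = \alpha + \beta\norm{\sigma}_1$ (so $\gamma(\sigma') \le \gamma(\sigma) + c\beta$ by the triangle inequality), and $\Delta = |\x_{t,i}(\sigma) - \x_{t+1,i}(\sigma)|$. It is convenient to work with the \emph{perturbed} objectives $F(\x) = \sum_{j=1}^{t-1} f_j(\x) - \iprod{\sigma}{\x}$ and $G(\x) = F(\x) + f_t(\x)$, so that $\x_t(\sigma)$ is an $\gamma(\sigma)$-approximate minimizer of $F$, $\x_{t+1}(\sigma)$ of $G$, $\x_t(\sigma')$ of $F - c x_i$, and $\x_{t+1}(\sigma')$ of $G - c x_i$; folding the linear term in this way keeps the Lipschitz constant $L$ from ever multiplying anything other than $\norm{\x_t(\sigma)-\x_{t+1}(\sigma)}_1$. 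Since both $\x_{t,i}(\sigma')$ and $\x_{t+1,i}(\sigma')$ sit inside the minimum on the left-hand side, I need a lower bound of the stated form on each of them.

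Lemma~\ref{lem:monotone1} with this $c$ immediately gives $\x_{t,i}(\sigma') \ge \x_{t,i}(\sigma) - \tfrac{2\gamma(\sigma)}{c} - \beta$ and $\x_{t+1,i}(\sigma') \ge \x_{t+1,i}(\sigma) - \tfrac{2\gamma(\sigma)}{c} - \beta$; these already dominate the target for whichever of the two coordinates attains $\max(\x_{t,i}(\sigma), \x_{t+1,i}(\sigma))$. The new ingredient is the crossed pair $\x_{t,i}(\sigma') \ge \x_{t+1,i}(\sigma) - \tfrac{\Delta}{10} - \tfrac{3\gamma(\sigma)}{c} - \beta$ and $\x_{t+1,i}(\sigma') \ge \x_{t,i}(\sigma) - \tfrac{\Delta}{10} - \tfrac{3\gamma(\sigma)}{c} - \beta$. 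For the first, I test the approximate optimality of $\x_t(\sigma')$ for $F - c x_i$ at the point $\x_{t+1}(\sigma)$; after cancelling the common part this reads $c\bigl(\x_{t+1,i}(\sigma) - \x_{t,i}(\sigma')\bigr) \le F(\x_{t+1}(\sigma)) - F(\x_t(\sigma')) + \gamma(\sigma')$. I then split $F(\x_{t+1}(\sigma)) - F(\x_t(\sigma'))$ as $\bigl(F(\x_{t+1}(\sigma)) - F(\x_t(\sigma))\bigr) + \bigl(F(\x_t(\sigma)) - F(\x_t(\sigma'))\bigr)$: the second difference is at most $\gamma(\sigma)$ by approximate optimality of $\x_t(\sigma)$ for $F$, while for the first I use that $\x_{t+1}(\sigma)$ is an approximate minimizer of $G = F + f_t$ tested at $\x_t(\sigma)$, which gives $F(\x_{t+1}(\sigma)) - F(\x_t(\sigma)) \le f_t(\x_t(\sigma)) - f_t(\x_{t+1}(\sigma)) + \gamma(\sigma) \le L\norm{\x_t(\sigma) - \x_{t+1}(\sigma)}_1 + \gamma(\sigma)$. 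Collecting terms and using $\gamma(\sigma') \le \gamma(\sigma) + c\beta$ yields $\x_{t+1,i}(\sigma) - \x_{t,i}(\sigma') \le \tfrac{L}{c}\norm{\x_t(\sigma) - \x_{t+1}(\sigma)}_1 + \tfrac{3\gamma(\sigma)}{c} + \beta$, and the hypothesis $\norm{\x_t(\sigma) - \x_{t+1}(\sigma)}_1 \le 10d\,\Delta$ together with $c = 100Ld$ turns the first term into $\tfrac{\Delta}{10}$. The second crossed inequality is proved symmetrically, testing the optimality of $\x_{t+1}(\sigma')$ for $G - c x_i$ at $\x_t(\sigma)$ and bounding $G(\x_t(\sigma)) - G(\x_{t+1}(\sigma))$ via the optimality of $\x_t(\sigma)$ for $F$ and the Lipschitzness of $f_t$.

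Finally I combine: for $\x_{t,i}(\sigma')$ the two available lower bounds, namely $\x_{t,i}(\sigma) - \tfrac{2\gamma(\sigma)}{c} - \beta$ and $\x_{t+1,i}(\sigma) - \tfrac{\Delta}{10} - \tfrac{3\gamma(\sigma)}{c} - \beta$, together dominate $\max(\x_{t,i}(\sigma), \x_{t+1,i}(\sigma)) - \tfrac{\Delta}{10} - \tfrac{3\gamma(\sigma)}{c} - \beta$ (check the two cases of which coordinate is larger), and symmetrically for $\x_{t+1,i}(\sigma')$; taking the minimum of the two gives the claim. I expect the middle step to be the main obstacle: arranging the chain of approximate-optimality inequalities so that $L$ multiplies only $\norm{\x_t(\sigma)-\x_{t+1}(\sigma)}_1$ — the quantity the hypothesis controls — and carefully tracking the oracle error's dependence on $\norm{\sigma'}_1$ so that the accumulated slack is exactly $\tfrac{3(\alpha+\beta\norm{\sigma}_1)}{100Ld} + \beta$ and no larger.
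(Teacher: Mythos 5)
Your proof is correct and follows essentially the same route as the paper's: both argue via the same three approximate-optimality tests (the perturbed iterate at the unperturbed one, the unperturbed iterate at its own shifted copy, and the cross-iteration comparison that lets the $L$-Lipschitz bound on $f_t$ absorb the extra loss term), combine them with Lemma~\ref{lem:monotone1} for the non-crossed coordinates, and track the same slack $2\gamma(\sigma)+\gamma(\sigma') = 3\gamma(\sigma) + 100Ld\beta$. The only difference is bookkeeping — you organize the chain as one optimality test followed by a telescoping decomposition of $F(\x_{t+1}(\sigma)) - F(\x_t(\sigma'))$ (resp. $G(\x_t(\sigma)) - G(\x_{t+1}(\sigma'))$), whereas the paper sandwiches the pivot quantity $f_{1:t}(\x_t(\sigma)) - \iprod{\sigma}{\x_t(\sigma)}$ between an upper and a lower bound — but the individual inequalities invoked and the resulting constants are identical.
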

\begin{proof}
Let $f_{1:t}(\x) = \sum_{i = 1}^tf_i(\x)$ and let $\gamma(\sigma) = \alpha + \beta \|\sigma\|_{1}$ be the approximation error of the offline optimization oracle. From the approximate optimality of $\x_t(\sigma)$, we have
\begingroup\makeatletter\def\f@size{10}\check@mathfonts
\begin{align*}
			&f_{1:t-1}(\x_t(\sigma)) - \iprod{\sigma}{\x_t(\sigma)}  + f_t(\x_t(\sigma)) \\
			&\quad \leq f_{1:t-1}(\x_{t+1}(\sigma)) - \iprod{\sigma}{\x_{t+1}(\sigma)} + f_t(\x_t(\sigma)) + \gamma(\sigma)\\
			&\quad \stackrel{(a)}{\leq} f_{1:t-1}(\x_{t+1}(\sigma)) - \iprod{\sigma}{\x_{t+1}(\sigma)} + f_t(\x_{t+1}(\sigma)) + L\norm{\x_{t}(\sigma)-\x_{t+1}(\sigma)}_1 + \gamma(\sigma)\\
			&\quad \stackrel{(b)}{\leq} f_{1:t-1}(\x_{t+1}(\sigma)) - \iprod{\sigma}{\x_{t+1}(\sigma)} + f_t(\x_{t+1}(\sigma)) + 10Ld|\x_{t,i}(\sigma) - \x_{t+1,i}(\sigma)| + \gamma(\sigma),
\end{align*}
\endgroup
where $(a)$ follows from the Lipschitz property of $f_t(\cdot)$ and $(b)$ follows from our assumption on $\norm{\x_{t}(\sigma) - \x_{t+1}(\sigma)}_1$.
Next, from the optimality of $\x_{t+1}(\sigma')$, we have
\begingroup\makeatletter\def\f@size{10}\check@mathfonts
\begin{align*}
			&f_{1:t-1}(\x_t(\sigma)) - \iprod{\sigma}{\x_t(\sigma)}  + f_t(\x_t(\sigma)) \\
			&\quad = f_{1:t-1}(\x_t(\sigma)) - \iprod{\sigma'}{\x_t(\sigma)}  + f_t(\x_t(\sigma)) + \iprod{100Ld\e_i}{\x_t(\sigma)}\\
			&\quad \geq f_{1:t-1}(\x_{t+1}(\sigma')) - \iprod{\sigma'}{\x_{t+1}(\sigma')} + f_t(\x_{t+1}(\sigma')) + 100Ld\x_{t,i}(\sigma)-\gamma(\sigma')\\
			&\quad = f_{1:t-1}(\x_{t+1}(\sigma')) - \iprod{\sigma}{\x_{t+1}(\sigma')} + f_t(\x_{t+1}(\sigma')) + 100Ld(\x_{t,i}(\sigma)-\x_{t+1,i}(\sigma'))-\gamma(\sigma')\\
			&\quad \geq f_{1:t-1}(\x_{t+1}(\sigma)) - \iprod{\sigma}{\x_{t+1}(\sigma)} + f_t(\x_{t+1}(\sigma)) +100Ld(\x_{t,i}(\sigma)-\x_{t+1,i}(\sigma')) - \gamma(\sigma') - \gamma(\sigma),
\end{align*}
\endgroup
where the last inequality follows from the optimality of $\x_{t+1}(\sigma)$. 
Combining the above two equations, we get
\[
\x_{t+1,i}(\sigma')-\x_{t,i}(\sigma) \geq -\frac{1}{10}|\x_{t,i}(\sigma) - \x_{t+1,i}(\sigma)|-\frac{3\gamma(\sigma)}{100Ld} - \beta.
\]
A similar argument shows that 
\[
\x_{t,i}(\sigma')-\x_{t+1,i}(\sigma) \geq -\frac{1}{10}|\x_{t,i}(\sigma) - \x_{t+1,i}(\sigma)|-\frac{3\gamma(\sigma)}{100Ld} - \beta.
\]
Finally, from the monotonicity property in Lemma~\ref{lem:monotone1} we know that
\[
\x_{t+1,i}(\sigma')- \x_{t+1,i}(\sigma) \geq -\frac{3\gamma(\sigma)}{100Ld}-\beta,\quad \x_{t,i}(\sigma')- \x_{t,i}(\sigma) \geq -\frac{3\gamma(\sigma)}{100Ld}-\beta.
\]
Combining the above four inequalities gives us the required result.
\end{proof}
% \paragraph{Extension to Exponential Distribution.} Suppose $\sigma \sim (\text{Exp}(\eta))^d$.
% From Claim~\ref{clm:strmon}, whenever $\norm{\x^1-\x^2} \leq {10 d} \cdot \abs{\x^1_i - \x^2_i}$, we have
% \[
% \x^2_i(\sigma_i+100Ld)-\x^1_i(\sigma_i) \geq - \frac{1}{2}\left(\x^1_i(\sigma_i) - \x^2_i(\sigma_i)\right) \geq -\frac{1}{2}|\x^1_i(\sigma_i) - \x^2_i(\sigma_i)|.
% \]
% A similar argument shows that
% \[
% \x^1_i(\sigma_i+100Ld)-\x^2_i(\sigma_i) \geq  -\frac{1}{2}|\x^1_i(\sigma_i) - \x^2_i(\sigma_i)|
% \]
% Combining the above two inequalities, we get the following ``monotonicity'' property 
% \begin{equation}
%     \label{eqn:monotone}
% \min\left(\x^1_i(\sigma_i+100Ld), \x^2_i(\sigma_i+100Ld)\right) \geq \max\left(\x^1_i(\sigma_i), \x^2_i(\sigma_i)\right) - \frac{1}{2}|\x^1_i(\sigma_i) - \x^2_i(\sigma_i)|.
% \end{equation}
% Moreover, for all $\sigma_i$, we have
% \begin{equation}
%     \label{eqn:monotone_gen}
%     \x^1_i(\sigma_i+100Ld) \geq \x^1_i(\sigma_i), \quad \x^2_i(\sigma_i+100Ld) \geq \x^2_i(\sigma_i).
% \end{equation}
\paragraph{Proof of Theorem~\ref{thm:ftpl}.} We now proceed to the proof of Theorem~\ref{thm:ftpl}. We use the same notation as in Lemmas~\ref{lem:monotone1},~\ref{lem:monotone2}.
First note that $\E{\|\x_t(\sigma)-\x_{t+1}(\sigma)\|_1}$ can be written as
\begin{equation}
    \label{eqn:coordinatewise}
    \E{\norm{\x_t(\sigma)-\x_{t+1}(\sigma)}_1} = \sum_{i = 1}^d \E{|\x_{t,i}(\sigma)-\x_{t+1,i}(\sigma)|}.
\end{equation}
To bound $\E{\norm{\x_t(\sigma)-\x_{t+1}(\sigma)}_1}$ we derive an upper bound for \mbox{$\E{|\x_{t,i}(\sigma) - \x_{t+1,i}(\sigma)|}, \forall i\in [d]$}. For any $i \in [d]$, define $\Eover{-i}{|\x_{t,i}(\sigma) - \x_{t+1,i}(\sigma)|}$ as 
$$\Eover{-i}{|\x_{t,i}(\sigma) - \x_{t+1,i}(\sigma)|} \coloneqq \E{|\x_{t,i}(\sigma) - \x_{t+1,i}(\sigma)|\Big| \{\sigma_{j}\}_{j\neq i}},$$
where $\sigma_j$ is the $j^{th}$ coordinate of $\sigma$.  Let $\x_{max,i}(\sigma) = \max\left(\x_{t,i}(\sigma), \x_{t+1,i}(\sigma)\right)$ and $\x_{min,i}(\sigma) = \min\left(\x_{t,i}(\sigma), \x_{t+1,i}(\sigma)\right)$. Then $\Eover{-i}{|\x_{t,i}(\sigma) - \x_{t+1,i}(\sigma)|} = \Eover{-i}{\x_{max,i}(\sigma)} - \Eover{-i}{\x_{min,i}(\sigma)}$. 
%We now obtain an upper bound for the conditional expectation $\mathbb{E}_{-i}[|\x_{t,i}(\sigma) - \x_{t+1,i}(\sigma)|]$. 
Define  event $\mathcal{E}$ as 
$$\mathcal{E} = \left\lbrace \sigma :\norm{\x_{t}(\sigma)-\x_{t+1}(\sigma)}_1 \leq {10  d} \cdot \abs{\x_{t,i}(\sigma)-\x_{t+1,i}(\sigma)}\right\rbrace.$$ 
% Then we have
% \[
% \E{|\x^1_i - \x^2_i|} = \E{\x_{max,i}(\sigma_i) }-\E{\x_{min,i}(\sigma_i)}.
% \]
Consider the following
\begin{equation*}
    \begin{array}{lll}
         \Eover{-i}{\x_{min,i}(\sigma)}  &=& \Pr(\sigma_i < 100Ld) \Eover{-i}{\x_{min,i}(\sigma)|\sigma_i < 100Ld} \vspace{0.1in}\\
         && + \Pr(\sigma_i \geq 100Ld) \Eover{-i}{\x_{min,i}(\sigma) |\sigma_i \geq 100Ld}\vspace{0.1in}\\
         &\geq& \left(1-\exp(-100\eta Ld)\right)(\Eover{-i}{\x_{max, i}(\sigma)}-D) \vspace{0.1in}\\
         && + \exp(-100\eta Ld)\Eover{-i}{\x_{min,i}(\sigma+100Ld\e_i)},
\end{array}
\end{equation*}
where the last inequality follows  the fact that the domain of $i^{th}$ coordinate lies within some interval of length $D$ and since $\Eover{-i}{\x_{min,i}(\sigma)|\sigma_i < 100Ld}$ and $\Eover{-i}{\x_{max, i}(\sigma)}$ are points in this interval, their difference is bounded by $D$. We can further lower bound $\Eover{-i}{\x_{min,i}(\sigma)}$ as follows
\begin{equation*}
    \begin{array}{lll}
         \Eover{-i}{\x_{min,i}(\sigma)} 
         &\geq & \left(1-\exp(-100\eta Ld)\right)(\Eover{-i}{\x_{max, i}(\sigma)}-D)\vspace{0.1in}\\
         && + \exp(-100\eta Ld)\Pr_{-i}(\mathcal{E})\Eover{-i}{\x_{min,i}(\sigma+100Ld\e_i) | \mathcal{E}} \vspace{0.1in}\\
         && + \exp(-100\eta Ld)\Pr_{-i}(\mathcal{E}^c)\Eover{-i}{\x_{min,i}(\sigma+100Ld\e_i) | \mathcal{E}^c},
\end{array}
\end{equation*}
where $\mathbb{P}_{-i}(\mathcal{E})$ is defined as $\Pr_{-i}(\mathcal{E}) \coloneqq \Pr\left(\mathcal{E}\Big| \{\sigma_{j}\}_{j\neq i}\right).$ We now use the monotonicity properties proved in Lemmas~\ref{lem:monotone1},~\ref{lem:monotone2} to further lower bound $\Eover{-i}{\x_{min,i}(\sigma)}$. Let $\gamma(\sigma) = \alpha + \beta \|\sigma\|_{1}$ be the approximation error of the offline optimization oracle. Then
\begingroup\makeatletter\def\f@size{10}\check@mathfonts
\begin{equation*}
    \begin{array}{lll}
         \Eover{-i}{\x_{min,i}(\sigma)} &\geq&  \left(1-\exp(-100\eta Ld)\right)(\Eover{-i}{\x_{max, i}(\sigma)}-D)\vspace{0.1in}\\
         && + \exp(-100\eta Ld)\Pr_{-i}(\mathcal{E})\Eover{-i}{\x_{max,i}(\sigma) -\frac{1}{10}|\x_{t,i}(\sigma) - \x_{t+1,i}(\sigma)| - \frac{3\gamma(\sigma)}{100Ld}-\beta \Big| \mathcal{E}}\vspace{0.1in}\\
         && + \exp(-100\eta Ld)\Pr_{-i}(\mathcal{E}^c)\Eover{-i}{\x_{min,i}(\sigma) - \frac{2\gamma(\sigma)}{100Ld}-\beta| \mathcal{E}^c}\vspace{0.1in}\\
         &\geq& \left(1-\exp(-100\eta Ld)\right)(\Eover{-i}{\x_{max, i}(\sigma)}-D)\vspace{0.1in}\\
         && + \exp(-100\eta Ld)\Pr_{-i}(\mathcal{E})\Eover{-i}{\x_{max,i}(\sigma) -\frac{1}{10}|\x_{t,i}(\sigma) - \x_{t+1,i}(\sigma)|-\frac{3\gamma(\sigma)}{100Ld} -\beta\Big| \mathcal{E}}\vspace{0.1in}\\
         && + \exp(-100\eta Ld)\Pr_{-i}(\mathcal{E}^c)\Eover{-i}{\x_{max,i}(\sigma)-\frac{1}{10d}\|\x_{t}(\sigma)-\x_{t+1}(\sigma)\|_1 - \frac{2\gamma(\sigma)}{100Ld} -\beta\Big| \mathcal{E}^c},
    \end{array}
\end{equation*}
\endgroup
where the first inequality follows from Lemmas~\ref{lem:monotone1},~\ref{lem:monotone2}, the second inequality follows from the definition of $\mathcal{E}^c$. Rearranging the terms in the RHS and using $\Pr_{-i}(\mathcal{E}) \leq 1$ gives us 
\begin{equation*}
    \begin{array}{lll}
         \Eover{-i}{\x_{min,i}(\sigma)} &\geq&   \left(1-\exp(-100\eta Ld)\right)(\Eover{-i}{\x_{max, i}(\sigma)}-D)\vspace{0.1in}\\
         && + \exp(-100\eta Ld)\Eover{-i}{\x_{max,i}(\sigma) - \frac{3\gamma(\sigma)}{100Ld} - \beta}\vspace{0.1in}\\
         && - \exp(-100\eta Ld)\Eover{-i}{\frac{1}{10}|\x_{t,i}(\sigma) - \x_{t+1,i}(\sigma)|+\frac{1}{10d}\|\x_{t}(\sigma) - \x_{t+1}(\sigma)\|_1}\vspace{0.1in}\\
         &\geq& \Eover{-i}{\x_{max, i}(\sigma)}  -100\eta L d D - \frac{3\gamma(\sigma)}{100Ld}-\beta\vspace{0.1in}\\
         && - \Eover{-i}{\frac{1}{10}|\x_{t,i}(\sigma) - \x_{t+1,i}(\sigma)|+\frac{1}{10d}\|\x_{t}(\sigma) - \x_{t+1}(\sigma)\|_1},
    \end{array}
\end{equation*}
where the last inequality uses the the fact that $\exp(x) \geq 1 + x$. Rearranging the terms in the last inequality gives us
\begin{align*}
    \Eover{-i}{|\x_{t,i}(\sigma) - \x_{t+1,i}(\sigma)|} &\leq \frac{1}{9d}\Eover{-i}{\norm{\x_{t}(\sigma) - \x_{t+1}(\sigma)}_1}\\
    &\quad + \frac{1000}{9}\eta L  d D  + \frac{\Eover{-i}{\gamma(\sigma)}}{30Ld} + \frac{10}{9}\beta.
\end{align*}
Since the above bound holds for any $\{\sigma_j\}_{j\neq i}$, we get the following bound on the unconditioned expectation
\begin{align*}
\E{|\x_{t,i}(\sigma) - \x_{t+1,i}(\sigma)|}  \leq & \frac{1}{9d}\E{\norm{\x_{t}(\sigma) - \x_{t+1}(\sigma)}_1}\\
 & + \frac{1000}{9}\eta L d D + \frac{\E{\gamma(\sigma)}}{30Ld} + \frac{10}{9}\beta.
\end{align*}
Plugging this in Equation~\eqref{eqn:coordinatewise} gives us the following bound on stability of predictions of FTPL
\[
\E{\norm{\x_{t}(\sigma) - \x_{t+1}(\sigma)}_1} \leq 125\eta L d^2 D + \frac{\beta d}{20 \eta L} + 2\beta d + \frac{\alpha}{20L}.
\]
Plugging the above bound in Equation~\eqref{eqn:stability} gives us the required bound on regret.

\section{Non-Convex OFTPL}
\label{sec:oftpl}
 In this section, we present a proof of Theorem~\ref{thm:oftpl}. Since we are in the oblivious adversary model, similar to the proof of Theorem~\ref{thm:ftpl}, we work with a single random vector $\sigma$ over the entire algorithm. We first relate the expected regret of OFTPL to the stability of its prediction. Unlike Lemma~\ref{lem:reg_to_stbl}, the upper bound we obtain for OFTPL depends on the Lipschitz constant of $(f_t-g_t)$. 
%The proof of the Lemma can be found in Appendix~\ref{sec:apx_stbl_oftpl}.
\begin{lemma}
    \label{lem:reg_to_stbl_oftpl}
    Let $\bx_t$ be any minimizer of $\sum_{i=1}^{t-1}f_i(\x) - \iprod{\sigma}{\x}$.
The regret of OFTPL can be upper bounded as 
    \begin{equation}
        \label{eqn:stability_oftpl}
        \E{\sum_{t = 1}^T f_t(\x_t) - \inf_{\x \in \mathcal{X}}\sum_{t=1}^Tf_t(\x)} \leq \sum_{t=1}^TL_t\E{\norm{\x_t-\bx_{t+1}}_1} +  \frac{d(\beta T+D)}{\eta} + \alpha T.
    \end{equation}
\end{lemma}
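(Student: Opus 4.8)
The plan is to run the ``be-the-leader'' argument underlying Lemma~\ref{lem:reg_to_stbl}, but with the exact unperturbed--guess leader $\bx_{t+1}$ playing the role that the next iterate $\x_{t+1}$ played there. Throughout I work with a single perturbation $\sigma$ (oblivious adversary). Introduce $F_s(\x) \defeq \sum_{i=1}^{s} f_i(\x) - \iprod{\sigma}{\x}$ (so $F_0(\x) = -\iprod{\sigma}{\x}$), $\Phi_s \defeq \inf_{\x \in \mathcal{X}} F_s(\x)$, and $\gamma(\sigma) \defeq \alpha + \beta\norm{\sigma}_1$. By the OFTPL prediction rule~\eqref{eqn:oftpl_pred}, $\x_t$ is a $\gamma(\sigma)$-approximate minimizer of $F_{t-1} + g_t$, while $\bx_{t+1}$ is an exact minimizer of $F_t = F_{t-1} + f_t$, so $\Phi_t = F_t(\bx_{t+1})$.

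The heart of the argument is to show that, although $\x_t$ is chosen with respect to the \emph{guess} $g_t$, it is an approximate minimizer of the \emph{true} cumulative objective $F_t$ with suboptimality governed by $f_t - g_t$. Using $F_t = (F_{t-1}+g_t) + (f_t-g_t)$ and the approximate optimality of $\x_t$ with $\bx_{t+1}$ as competitor,
\begin{align*}
F_t(\x_t) &= (F_{t-1}+g_t)(\x_t) + (f_t-g_t)(\x_t)\\
&\le (F_{t-1}+g_t)(\bx_{t+1}) + \gamma(\sigma) + (f_t-g_t)(\x_t)\\
&= \Phi_t + \big[(f_t-g_t)(\x_t) - (f_t-g_t)(\bx_{t+1})\big] + \gamma(\sigma),
\end{align*}
where the last line uses $(F_{t-1}+g_t)(\bx_{t+1}) = F_t(\bx_{t+1}) - (f_t-g_t)(\bx_{t+1}) = \Phi_t - (f_t-g_t)(\bx_{t+1})$. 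The $L_t$-Lipschitzness of $f_t-g_t$ then gives $F_t(\x_t) - \Phi_t \le L_t\norm{\x_t - \bx_{t+1}}_1 + \gamma(\sigma)$.

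Given this, I would finish by telescoping. Since $f_t(\x_t) = F_t(\x_t) - F_{t-1}(\x_t) \le F_t(\x_t) - \Phi_{t-1}$, the bound above yields $f_t(\x_t) \le \Phi_t - \Phi_{t-1} + L_t\norm{\x_t-\bx_{t+1}}_1 + \gamma(\sigma)$, and summing over $t \in [T]$ collapses the potentials to $\Phi_T - \Phi_0$. Next I would observe that $\Phi_T - \Phi_0 \le \inf_{\x\in\mathcal{X}}\sum_{t=1}^T f_t(\x) + D\norm{\sigma}_1$: for every $\x\in\mathcal{X}$, $\Phi_T - \Phi_0 \le F_T(\x) - \Phi_0 = \sum_t f_t(\x) - \iprod{\sigma}{\x} + \sup_{\y\in\mathcal{X}}\iprod{\sigma}{\y} \le \sum_t f_t(\x) + D\norm{\sigma}_1$, since $\sigma$ has nonnegative coordinates and $\mathcal{X}$ has $\ell_\infty$-diameter $D$; infimizing over $\x$ gives the claim. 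Putting the pieces together,
\begin{equation*}
\sum_{t=1}^T f_t(\x_t) - \inf_{\x\in\mathcal{X}}\sum_{t=1}^T f_t(\x) \;\le\; \sum_{t=1}^T L_t\norm{\x_t-\bx_{t+1}}_1 + T\gamma(\sigma) + D\norm{\sigma}_1,
\end{equation*}
and taking expectations with $\E{\norm{\sigma}_1} = d/\eta$ (each $\sigma_j\sim\text{Exp}(\eta)$ has mean $1/\eta$) converts $T\gamma(\sigma)+D\norm{\sigma}_1$ into $\alpha T + \frac{d(\beta T + D)}{\eta}$, exactly matching the statement.

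I expect the genuinely delicate point to be the key step above. The tempting shortcut --- writing $f_t(\x_t)-f_t(\bx_{t+1}) = \big[(f_t-g_t)(\x_t)-(f_t-g_t)(\bx_{t+1})\big] + \big[g_t(\x_t)-g_t(\bx_{t+1})\big]$ and bounding the first bracket by $L_t\norm{\x_t-\bx_{t+1}}_1$ --- fails, because the leftover $\sum_t[g_t(\x_t)-g_t(\bx_{t+1})]$ is not small termwise (the optimality conditions deliver only the circular $g_t(\x_t)-g_t(\bx_{t+1}) \le f_t(\x_t)-f_t(\bx_{t+1})+\gamma(\sigma)$). Passing instead through the potentials $\Phi_t$ --- certifying $\x_t$ as a near-minimizer of the true cumulative objective and then telescoping --- is what makes the $g_t$-dependence cancel and leaves the $L_t$ rather than $L$ factor.
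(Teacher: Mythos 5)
Your proof is correct and the key calculation checks out: from the approximate optimality of $\x_t$ for $F_{t-1}+g_t$ with $\bx_{t+1}$ as competitor, you get $F_t(\x_t) \le \Phi_t + \big[(f_t-g_t)(\x_t)-(f_t-g_t)(\bx_{t+1})\big] + \gamma(\sigma)$, and the $L_t$-Lipschitzness of $f_t-g_t$, the lower bound $F_{t-1}(\x_t)\ge\Phi_{t-1}$, the telescope, and the bound $\Phi_T-\Phi_0\le\inf_\x\sum_t f_t(\x)+D\|\sigma\|_1$ (H\"older with the $\ell_\infty$ diameter and nonnegativity of $\sigma$) together yield exactly the claimed inequality after taking $\E{\|\sigma\|_1}=d/\eta$.

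The route is genuinely different in organization from the paper's, though it uses the same three ingredients (approximate optimality of $\x_t$ for $F_{t-1}+g_t$, exact optimality of $\bx_{t+1}$ for $F_t$, and $L_t$-Lipschitzness of $f_t-g_t$). The paper decomposes the per-round regret as $\big[\Delta_t(\x_t)-\Delta_t(\bx_{t+1})\big] + \big[g_t(\x_t)-g_t(\bx_{t+1})\big] + \big[f_t(\bx_{t+1})-f_t(\x^*)\big]$ with $\Delta_t=f_t-g_t$, bounds the first sum by Lipschitzness, and then proves by induction on $T$ that $\sum_t\big[g_t(\x_t)-g_t(\bx_{t+1})\big] + \sum_t\big[f_t(\bx_{t+1})-f_t(\x^*)\big] \le \iprod{\sigma}{\bx_2-\x^*}+\gamma(\sigma)(T-1)$. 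Your potential-function formulation replaces that explicit induction with a one-shot telescope on $\Phi_t$, and makes explicit a statement that the paper leaves implicit inside its induction step: that $\x_t$, although chosen against the guessed objective $F_{t-1}+g_t$, is a near-minimizer of the true cumulative objective $F_t$ with slack $L_t\|\x_t-\bx_{t+1}\|_1+\gamma(\sigma)$. That reformulation is a bit cleaner, at the negligible cost of one extra $\gamma(\sigma)$ in the additive term (you get $T\gamma(\sigma)$ where the paper gets $(T-1)\gamma(\sigma)$, both of which round to $\alpha T + \beta T d/\eta$). Your closing remark about why the naive decomposition into $\Delta_t$ and $g_t$ brackets does not close on its own is also the right caution; the paper avoids it by carrying the $g_t$-sum through the induction rather than bounding it termwise.
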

The rest of the proof of Theorem~\ref{thm:oftpl} involves bounding $\E{\norm{\x_t-\bx_{t+1}}_1}$ and uses identical arguments as in the proof of Theorem~\ref{thm:ftpl} (see Appendix~\ref{sec:apx_oftpl_proof}).

\section{Conclusion}
\label{sec:conclusion}
In this work, we considered the  problem of online learning with non-convex losses and showed that the classical FTPL algorithm with access to an offline optimization oracle achieves optimal regret rate of $O(T^{-1/2})$. We further showed that an optimistic variant of FTPL can achieve better regret bounds when the sequence of losses are predictable. 

The problem of online non-convex learning has several important applications in machine learning. We believe the algorithms studied in this work can lead to improved training procedures for adversarial training and training of Generative Adversarial Networks, which currently rely on algorithms from online convex learning to solve the non-convex non-concave saddle point problems in their training objectives.

%\clearpage
\appendix
\section{Proof of Proposition~\ref{prop:no_deterministic}}
For any deterministic algorithm, we show that there exists a sequence of loss functions over which the algorithm has $\Omega(T)$ regret. We work in the $1$-dimensional setting and assume that the domain $\mathcal{X}$ is equal to $[-D, D]$. Suppose the adversary chooses the loss functions from the following class of $1$-Lipschitz functions $\mathcal{F} = \{g_a(\x): a \in [-D, D]\}$, where $g_a$ is given by $$g_a(\x) = \max\left\lbrace 0, \frac{D}{2} - |\x - a|\right\rbrace.$$ 
We now describe our construction of the sequence of losses that cause the deterministic algorithm to fail. Let $f_{< t} = \{f_1, \dots f_{t-1}\}$ be the sequence of loss functions chosen until iteration $t-1$. Let $\x_t$ be the prediction of the deterministic learner at iteration $t$. Then we choose the loss at iteration $t$ as $f_t(\x) = g_{\x_t}(\x)$. It is easy to see that, after $T$ iterations, the loss suffered by the learner is equal to $\frac{DT}{2}$. Whereas, the loss of the best action in hindsight can be upper bounded as
\[
\inf_{\x \in [-D,D]} \sum_{t=1}^T f_t(\x) \leq \frac{DT}{4}.
\]
This shows that the regret of any deterministic algorithm is $\Omega(1)$.
\section{Non-oblivious to Oblivious Adversary Model}
\label{sec:apx_reduction}
In the oblivious adversary model, the actions $\{f_t\}_{t=1}^T$ of the adversary are assumed to be independent of the predictions $\{\x_t\}_{t=1}^T$ of the FTPL/OFTPL algorithm. In this model, we assume that the sequence of losses $\{f_t\}_{t=1}^T$ is fixed ahead of time. Whereas in the non-oblivious adversary model, the actions of the adversary are allowed to depend on the past predictions of the algorithm, \emph{i.e.,} each $f_t$ is given by $f_t \coloneqq F_{t}[\x_{<t}]$ for some function $F_t:\mathcal{X}^{t-1}\rightarrow \mathcal{F}$, where $\mathcal{F}$ is the set of all possible actions of the adversary and $\x_{<t}$ is a shorthand for $\{\x_1\dots \x_{t-1}\}$ and $F_1$ is a constant function. Note that the functions $F_1\dots F_T$ uniquely determine a non-oblivious adversary. 

Let $P_t$ be the conditional distribution of the prediction $\x_t$ of the FTPL/OFTPL algorithm, conditioned on the past predictions $\x_{<t}$. Note that when the adversary is oblivious, $P_t$ is independent of $\x_{<t}$. Moreover, in both oblivious and non-oblivious models, $P_t$ is fully determined by the past actions $f_{<t}$ of the adversary. Let $f_t(P_t)$ denote the expected loss $\Eover{\x\sim P_t}{f_t(\x)|\x_{<t}}$. 

The following Theorem  shows that any algorithm which is guaranteed to work against an oblivious adversary also works against a non-oblivious adversary. This is an adaptation of Lemma 4.1 of \cite{cesa2006prediction} to the setting studied in this paper.
\begin{theorem}
\label{thm:reduction_oblv}
Let $B$ be a positive constant. Suppose the FTPL, OFTPL algorithms satisfy the following regret bound against an oblivious adversary
\begin{equation}
\label{eqn:oblivious_regret}
    \E{\sum_{t=1}^T f_t(\x_t) - \inf_{\x \in \mathcal{X}}\sum_{t=1}^T f_t(\x)} \leq B,\quad  \forall f_1\dots f_T \in \mathcal{F}.
\end{equation}
Then these algorithms satisfy the following regret bound against a non-oblivious adversary
\[
\sum_{t=1}^Tf_t(P_t) - \inf_{\x \in \mathcal{X}}\sum_{t=1}^T f_t(\x) \leq B.
\]
\end{theorem}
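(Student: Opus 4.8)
The plan is to establish the stronger, pathwise statement — the displayed inequality holds for \emph{every} realization of the algorithm's internal randomness $\{\sigma_t\}_{t=1}^T$ (and of any randomness used by the oracle) — from which the expectation form is immediate. The only property of FTPL/OFTPL I will use is the one already recorded before the theorem: the conditional law $P_t$ of $\x_t$ given $\x_{<t}$ is a function of the realized loss prefix $f_{<t}$ alone, because $\x_t$ is produced by feeding $\sum_{i<t}f_i$ (resp.\ $\sum_{i<t}f_i+g_t[f_{<t}]$) to the oracle together with a \emph{fresh} perturbation $\sigma_t$ that is independent of $\x_{<t}$; moreover this function is exactly the round-$t$ prediction kernel of the algorithm when it is run against the oblivious loss sequence $f_{<t}$.

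I would then fix an arbitrary realization and write $\x_1,\dots,\x_T$ for the resulting predictions, $f_t=F_t[\x_{<t}]\in\mathcal F$ for the resulting losses, and $P_t$ for the resulting conditional laws. Next I would introduce an auxiliary \emph{oblivious} adversary that commits ahead of time to this very sequence $f_1,\dots,f_T$ and run FTPL/OFTPL against it. By the structural fact above, this run's round-$t$ prediction is distributed as $P_t$ (the losses it has observed through round $t-1$ are precisely $f_{<t}$), so by the tower rule its expected loss equals $\sum_{t=1}^T\Eover{\x\sim P_t}{f_t(\x)}=\sum_{t=1}^T f_t(P_t)$, while its comparator term is the deterministic quantity $\inf_{\x\in\mathcal X}\sum_{t=1}^T f_t(\x)$. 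Feeding this fixed loss sequence into the hypothesised oblivious regret bound \eqref{eqn:oblivious_regret} yields
\[
\sum_{t=1}^T f_t(P_t)-\inf_{\x\in\mathcal X}\sum_{t=1}^T f_t(\x)\le B ,
\]
and since the realization was arbitrary this is exactly the asserted bound.

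The two identities invoked above are routine: ``expected loss against a fixed loss sequence $=\sum_t f_t(P_t)$'' is the tower rule together with the fact that, for a fixed loss sequence, $P_t$ and $f_t$ are deterministic so the conditioning on $\x_{<t}$ is vacuous; and ``comparator loss $=\inf_\x\sum_t f_t(\x)$'' is trivial. The one point I expect to require genuine care, and would therefore isolate as an explicit observation before starting, is the measure-theoretic claim underlying the structural fact: that conditioning on $\x_{<t}$ in the non-oblivious interaction leaves $\sigma_t$ distributed as before and makes $f_{<t}$ a deterministic function of $\x_{<t}$, so that $\mathrm{Law}(\x_t\mid\x_{<t})$ literally coincides with the kernel governing round $t$ of the oblivious interaction on the matched prefix. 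This is where using independent per-round perturbations — rather than the single shared $\sigma$ convenient in the oblivious analysis — is essential. Everything else is bookkeeping, and the argument is the adaptation of Lemma~4.1 of \citet{cesa2006prediction} promised in the text.
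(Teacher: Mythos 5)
Your proposal is correct, but it follows a genuinely different route from the paper's. The paper's proof, adapting Lemma~4.1 of \citet{cesa2006prediction}, bounds $\sum_t F_t[\x_{<t}](P_t)-F_t[\x_{<t}](\x)$ by a nested supremum over all possible non-oblivious adversaries $F_1,\dots,F_T$, then argues that this nested supremum ``unrolls'' into a plain supremum over fixed loss sequences $g_1,\dots,g_T\in\mathcal{F}$ and that the maximizing sequence can be chosen independent of the realized $\x_t$ --- at which point the oblivious bound applies. Your argument instead fixes a realization of the internal randomness, records the realized loss prefix $f_1,\dots,f_T$, and couples the non-oblivious run to a fresh oblivious run that commits to exactly that prefix; the observation that the round-$t$ prediction kernels of the two runs coincide (because $P_t$ depends only on $f_{<t}$ and the fresh perturbation $\sigma_t$ is independent of $\x_{<t}$) then delivers the pathwise bound directly. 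Both arguments rest on the same two structural facts --- that $P_t$ is a function of $f_{<t}$ alone and that $\mathcal{F}$ contains all losses the non-oblivious adversary could play --- but your matched-trajectory argument is more elementary: it avoids the nested-supremum machinery and the somewhat delicate claim that the worst-case $F_t$ can be replaced by a constant function. You also make explicit, correctly, that per-round independent perturbations (rather than the single shared $\sigma$ used in the oblivious stability analysis) are what make the kernel-matching step go through; the paper leaves this implicit. The only thing I would make more precise in a final write-up is that the ``every realization'' quantifier really ranges over every $\x_{<T}$ in the support of the non-oblivious process (equivalently, every loss prefix the adversary's strategy can generate), which is exactly the set over which the oblivious hypothesis is assumed to hold.
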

\begin{proof}
Consider the non-oblivious adversary model. For any $\x \in \mathcal{X}$ we have
\begin{align*}
    &\sum_{t=1}^Tf_t(P_t) - \sum_{t=1}^T f_t(\x) \\
    &\quad = \sum_{t=1}^TF_t[\x_{<t}](P_t) - \sum_{t=1}^T F_t[\x_{<t}](\x)\\
    &\quad \stackrel{(a)}{\leq} \sup_{F_1,\dots F_T} \left(\sum_{t=1}^TF_t[\x_{<t}](P_t) - \sum_{t=1}^T F_t[\x_{<t}](\x)\right)\\
    &\quad \stackrel{(b)}{=} \sup_{g_1 \in \mathcal{F}} \left(g_1(P_1) - g_1(\x) + \sup_{g_2 \in \mathcal{F}}\left(g_2(P_2) - g_2(\x) + \sup_{g_3\in\mathcal{F}}\left(\dots + \sup_{g_T\in \mathcal{F}} g_T(P_T) - g_T(\x)\right) \right)\right),
\end{align*}
where the supremum in $(a)$ is over all possible non-oblivious adversaries. To see why $(b)$ holds, consider $T = 2$. Then
\begin{align*}
    &\sup_{F_1,F_2} \left(F_1[\x_{<1}](P_1) -  F_1[\x_{<1}](\x) + F_2[\x_{<2}](P_2) -  F_2[\x_{<2}](\x)\right)\\
    & \quad = \sup_{g_1 \in \mathcal{F},F_2} \left(g_1(P_1) -  g_1(\x) + F_2[\x_{<2}](P_2) -  F_2[\x_{<2}](\x)\right)\\
    &\quad = \sup_{g_1} \left(g_1(P_1) -  g_1(\x) + \sup_{g_2 \in \mathcal{F}}g_2(P_2) -  g_2(\x)\right).
\end{align*}
This shows that a good strategy for the adversary is to set $F_2[\x_{<2}]$ to be a maximizer of $g_2(P_2) - g_2(\x)$. 
Using a similar argument we can show that $(b)$ holds for $T  > 2$. 

Next, we show that 
\begin{align*}
    &\sup_{g_1 \in \mathcal{F}} \left(g_1(P_1) - g_1(\x) + \sup_{g_2 \in \mathcal{F}}\left(g_2(P_2) - g_2(\x) + \sup_{g_3\in\mathcal{F}}\left(\dots + \sup_{g_T\in \mathcal{F}} g_T(P_T) - g_T(\x)\right) \right)\right)\\
    &\quad = \sup_{g_1\dots g_T \in \mathcal{F}} \left(\sum_{t=1}^Tg_t(P_t) - g_t(\x)\right).
\end{align*}
Moreover, we show that the maximizers of the RHS objective are independent of the predictions $\{\x_t\}_{t=1}^T$ of the algorithm. This would then imply that the RHS is exactly equal to the regret of the algorithm under the oblivious adversary model, which is upper bounded by $B$.
To see why the above statements are true, again consider the case of $T=2$. First note that $g_1(P_1) - g_1(\x)$ is independent of $g_2$.
So $g_1(P_1) - g_1(\x)$ can be pushed inside the inner supermum. So we have
\begin{align*}
    & \sup_{g_1\in \mathcal{F}} \left(g_1(P_1) - g_1(\x) + \sup_{g_2 \in \mathcal{F}} \left(g_2(P_2) - g_2(\x)\right)\right)\\
    &\quad = \sup_{g_1, g_2\in \mathcal{F}} \left(g_1(P_1) - g_1(\x) + g_2(P_2) - g_2(\x)\right)
\end{align*}
To see why the maximizers of the RHS are independent of $\x_1,\x_2$, note that $P_1$ is independent of $\x_1,\x_2$. Moreover, $P_2$ is fully determinimed by $g_1$. So the objective is independent of $\x_1,\x_2$. This shows that the maximizers are independent of $\x_1,\x_2$. Using a similar argument we can show that the above claim holds for $T>2$. Finally, from the regret bound against an oblivious adversary in Equation~\eqref{eqn:oblivious_regret}, we have
\[
\sup_{g_1\dots g_T \in \mathcal{F}} \left(\sum_{t=1}^Tg_t(P_t) - g_t(\x)\right) = \sup_{g_1\dots g_T \in \mathcal{F}} \E{\sum_{t = 1}^T g_t(\x_t) - \sum_{t=1}^T g_t(\x)} \leq B.
\]
This shows that for any $\x \in \mathcal{X}$, $\sum_{t=1}^Tf_t(P_t) - \sum_{t=1}^T f_t(\x) \leq B$.
\end{proof}
\section{Proof of Lemma~\ref{lem:reg_to_stbl}}
\label{sec:apx_stbl_ftpl}
Let $\gamma(\sigma) = \alpha + \beta \|\sigma\|_{1}$. For any $\x^*\in \mathcal{X}$ we have
\begin{align*}
    &\sum_{t = 1}^T \left[f_t(\x_t) - f_t(\x^*)\right]  \\
    &\quad = \sum_{t = 1}^T \left[f_t(\x_t) -f_t(\x_{t+1})\right] + \sum_{t = 1}^T \left[ f_t(\x_{t+1}) - f_t(\x^*) \right]\\
    %& \quad = \sum_{t=1}^T \left[\Delta_t(\bx_t) - \Delta_t(\x_{t+1}')\right] + \sum_{t=1}^T\left[g_t(\bx_t) - g_t(\x_{t+1}')\right] + \sum_{t=1}^T\left[f_t(\x_{t+1}') - f_t(\x^*)\right] + \alpha T\\
    & \quad \leq \sum_{t=1}^TL\norm{\x_t - \x_{t+1}}_1 + \sum_{t=1}^T\left[f_t(\x_{t+1}) - f_t(\x^*)\right].
\end{align*}
We now use induction to show that $\sum_{t=1}^T\left[f_t(\x_{t+1}) - f_t(\x^*)\right] \leq \gamma(\sigma) T + \iprod{\sigma}{\x_2-\x^*}$.
\paragraph{Base Case ($T=1$).} Since $\x_2$ is an approximate minimizer of $f_1(\x) - \iprod{\sigma}{\x}$, we have
\[
f_1(\x_2) - \iprod{\sigma}{\x_2} \leq \min_{\x \in \mathcal{X}} f_1(\x) - \iprod{\sigma}{\x}  + \gamma(\sigma) \leq f_1(\x^*) - \iprod{\sigma}{\x^*}  + \gamma(\sigma),
\]
where the last inequality holds for any $\x^* \in \mathcal{X}$. This shows that \mbox{$f_1(\x_2)-f_1(\x^*) \leq \gamma(\sigma) + \iprod{\sigma}{\x_2-\x^*}$.}
\paragraph{Induction Step.} Suppose the claim holds for all $T \leq T_0-1$. We now show that it also holds for $T_0$.  
\begin{align*}
    & \sum_{t=1}^{T_0}f_t(\x_{t+1})\\
    &\quad \stackrel{(a)}{\leq} \left[\sum_{t=1}^{T_0-1}f_t(\x_{T_0+1}) + \iprod{\sigma}{\x_2-\x_{T_0+1}} + \gamma(\sigma) (T_0-1) \right]+ f_{T_0}(\x_{T_0+1})\\
    &\quad = \left[\sum_{t=1}^{T_0}f_t(\x_{T_0+1}) - \iprod{\sigma}{\x_{T_0+1}}\right] + \iprod{\sigma}{\x_{2}} + \gamma(\sigma) (T_0-1)\\
    &\quad \stackrel{(b)}{\leq} \sum_{t=1}^{T_0}f_t(\x^*) + \iprod{\sigma}{\x_{2}-\x^*} + \gamma(\sigma) T_0, \quad \forall \x^* \in \mathcal{X},
\end{align*}
where $(a)$ follows since the claim holds for any $T\leq T_0-1$, and $(b)$ follows from the approximate optimality of $\x_{T_0+1}$.

Using this result, we get the following upper bound on the expected regret of FTPL
\begin{align*}
    \E{\sum_{t = 1}^T f_t(\x_t) - \inf_{\x \in \mathcal{X}}\sum_{t=1}^Tf_t(\x)} &\leq L\sum_{t=1}^T\E{\norm{\x_t-\x_{t+1}}_1} + \E{\gamma(\sigma) T + \iprod{\sigma}{\x_2-\x^*}}\\
    & \leq L\sum_{t=1}^T\E{\norm{\x_t-\x_{t+1}}_1} + (\beta T+D)\left(\sum_{i=1}^d \E{\sigma_i}\right) + \alpha T
\end{align*}
The proof of the Lemma now follows from the following property of exponential distribution %$$\E{\norm{\sigma}_{\infty}} \leq \frac{1+\log{d}}{\eta}.$$
$$\E{\sigma_i} = \frac{1}{\eta_i}.$$
\section{Proof of Lemma~\ref{lem:reg_to_stbl_oftpl}}
\label{sec:apx_stbl_oftpl}
The proof uses similar arguments as in the proof of \citet{rakhlin2012online} for Optimistic FTRL. Let $\Delta_t(\x) = f_t(\x)-g_t(\x)$ and $\gamma(\sigma) = \alpha + \beta \|\sigma\|_{ 1}$. For any $\x^*\in \mathcal{X}$ we have
\begin{align*}
    &\sum_{t = 1}^T \left[f_t(\x_t) - f_t(\x^*)\right]  \\
    & \quad = \sum_{t=1}^T \left[\Delta_t(\x_t) - \Delta_t(\bx_{t+1})\right] + \sum_{t=1}^T\left[g_t(\x_t) - g_t(\bx_{t+1})\right] + \sum_{t=1}^T\left[f_t(\bx_{t+1}) - f_t(\x^*)\right]\\
    & \quad \leq \sum_{t=1}^TL_t\norm{\x_t - \bx_{t+1}}_1 + \sum_{t=1}^T\left[g_t(\x_t) - g_t(\bx_{t+1})\right] + \sum_{t=1}^T\left[f_t(\bx_{t+1}) - f_t(\x^*)\right].
\end{align*}
We use induction to show that the following holds for any $T,\x^*\in \mathcal{X}$
$$\sum_{t=1}^T\left[g_t(\x_t) - g_t(\bx_{t+1})\right] + \sum_{t=1}^T\left[f_t(\bx_{t+1}) - f_t(\x^*)\right] \leq \iprod{\sigma}{\bx_{2}-\x^*} + \gamma(\sigma) (T-1).$$
\paragraph{Base Case ($T=1$).} First note that $g_1 = 0$. Since $\bx_2$ is a minimizer of $f_1(\x) - \iprod{\sigma}{\x}$, we have
\[
f_1(\bx_2) - \iprod{\sigma}{\bx_2} \leq  f_1(\x^*) - \iprod{\sigma}{\x^*},\quad \forall \x^* \in \mathcal{X}.
\]
This shows that $f_1(\bx_2)-f_1(\x^*) \leq \iprod{\sigma}{\bx_2-\x^*}$.
\paragraph{Induction Step.} Suppose the claim holds for all $T \leq T_0-1$. We now show that it also holds for $T_0$. Consider the following series of inequalities  
\begin{align*}
    & \sum_{t=1}^{T_0}\left[g_t(\x_t) - g_t(\bx_{t+1})\right] + \sum_{t=1}^{T_0}f_t(\bx_{t+1}) \\
    & \quad \stackrel{(a)}{\leq} \left[\sum_{t=1}^{T_0-1}f_t(\x_{T_0}) + \iprod{\sigma}{\bx_{2} - \x_{T_0}} +\gamma(\sigma)(T_0-2)\right]+ \left[g_{T_0}(\x_{T_0}) - g_{T_0}(\bx_{T_0+1})+ f_{T_0}(\bx_{T_0+1})\right]\\
    &\quad =  \left[\sum_{t=1}^{T_0-1}f_t(\x_{T_0})+g_{T_0}(\x_{T_0}) - \iprod{\sigma}{\x_{T_0}}\right] + \left[\iprod{\sigma}{\bx_{2}} - g_{T_0}(\bx_{T_0+1})+ f_{T_0}(\bx_{T_0+1})\right]+\gamma(\sigma)(T_0-2)\\
    & \quad \stackrel{(b)}{\leq} \left[\sum_{t=1}^{T_0-1}f_t(\bx_{T_0+1})+g_{T_0}(\bx_{T_0+1}) - \iprod{\sigma}{\bx_{T_0+1}}\right] + \left[\iprod{\sigma}{\bx_{2}} - g_{T_0}(\bx_{T_0+1})+ f_{T_0}(\bx_{T_0+1})\right] +\gamma(\sigma)(T_0-1)\\
    & \quad = \left[\sum_{t=1}^{T_0}f_t(\bx_{T_0+1}) - \iprod{\sigma}{\bx_{T_0+1}}\right] +\iprod{\sigma}{\bx_{2}}  +\gamma(\sigma)(T_0-1)\\
    & \quad \stackrel{(c)}{\leq} \sum_{t=1}^{T_0}f_t(\x^*) + \iprod{\sigma}{\bx_{2}-\x^*} + \gamma(\sigma)(T_0-1),
\end{align*}
where $(a)$ follows since the claim holds for any $T\leq T_0-1$, $(b)$ follows from the approximate optimality of $\x_{T_0}$ and $(c)$ follows from the optimality of $\bx_{T_0+1}$.

This gives the following upper bound on the regret of OFTPL
\[
\sum_{t = 1}^T f_t(\x_t) - \inf_{\x\in\mathcal{X}}\sum_{i=1}^{T}f_t(\x) \leq \sum_{t=1}^TL_t\norm{\x_t - \bx_{t+1}}_1 + \frac{d(\beta T+D)}{\eta} + \alpha T .
\]%\frac{(1+\log{d})(\beta T + dD)}{\eta} + \alpha T

%%%%%%%%%%%%%%%%%%%%%%%%%%%%%%%%%%%%%%%%
%%%%%%%%%%%%%%%%%%%%%%%%%%%%%%%%%%%%%%%%
%%%%%%%%%%%%%%%%% OFTPL %%%%%%%%%%%%%%%%
%%%%%%%%%%%%%%%%%%%%%%%%%%%%%%%%%%%%%%%%
%%%%%%%%%%%%%%%%%%%%%%%%%%%%%%%%%%%%%%%%
\section{Proof of Theorem~\ref{thm:oftpl}}
\label{sec:apx_oftpl_proof}
\begin{lemma}
 Let $\x_t(\sigma)$ be the prediction of OFTPL in iteration $t$, with random perturbation $\sigma$. Then the following monotonicity property holds for any $c> 0$
\[
\x_{t,i}(\sigma+c\e_i) \geq \x_{t,i}(\sigma) - \frac{2(\alpha+\beta\|\sigma\|_{1})}{c} - \beta.
\]
\end{lemma}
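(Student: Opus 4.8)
The plan is to observe that this lemma is the exact analogue of Lemma~\ref{lem:monotone1} for OFTPL, and that its proof goes through essentially verbatim once we replace the cumulative loss $f_{1:t-1}$ by the ``cumulative loss plus guess'' function. Concretely, set $h_t(\x) \defeq \sum_{i=1}^{t-1} f_i(\x) + g_t(\x)$. By the OFTPL prediction rule \eqref{eqn:oftpl_pred}, $\x_t(\sigma)$ is an $(\alpha,\beta)$-approximate minimizer of $h_t(\x) - \iprod{\sigma}{\x}$, i.e.\ $h_t(\x_t(\sigma)) - \iprod{\sigma}{\x_t(\sigma)} \leq \inf_{\x\in\mathcal{X}} h_t(\x) - \iprod{\sigma}{\x} + \gamma(\sigma)$ with $\gamma(\sigma) = \alpha + \beta\|\sigma\|_1$. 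The crucial point that makes the substitution legitimate is that $g_t$ depends only on $f_1,\dots,f_{t-1}$ and \emph{not} on the perturbation $\sigma$, so $h_t$ is a fixed function of $\x$ as $\sigma$ varies.

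First I would write $\sigma' = \sigma + c\e_i$ and chain two applications of approximate optimality exactly as in Lemma~\ref{lem:monotone1}: starting from the approximate optimality of $\x_t(\sigma)$ against $h_t - \iprod{\sigma}{\cdot}$, rewrite $\iprod{\sigma}{\x_t(\sigma')} = \iprod{\sigma'}{\x_t(\sigma')} - c\,\x_{t,i}(\sigma')$, then invoke the approximate optimality of $\x_t(\sigma')$ against $h_t - \iprod{\sigma'}{\cdot}$, and finally use $\iprod{\sigma'}{\x_t(\sigma)} = \iprod{\sigma}{\x_t(\sigma)} + c\,\x_{t,i}(\sigma)$. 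The $h_t$ terms and the $\iprod{\sigma}{\x_t(\sigma)}$ terms cancel, leaving $0 \leq c\big(\x_{t,i}(\sigma') - \x_{t,i}(\sigma)\big) + \gamma(\sigma) + \gamma(\sigma')$, i.e.\ $\x_{t,i}(\sigma') \geq \x_{t,i}(\sigma) - \tfrac{\gamma(\sigma)+\gamma(\sigma')}{c}$.

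The last step is to simplify $\gamma(\sigma')$. Since the coordinates of $\sigma$ are drawn from $\mathrm{Exp}(\eta)$ they are nonnegative, and $c>0$, so $\|\sigma'\|_1 = \|\sigma + c\e_i\|_1 = \|\sigma\|_1 + c$; hence $\gamma(\sigma') = \gamma(\sigma) + \beta c$. Substituting gives $\x_{t,i}(\sigma+c\e_i) \geq \x_{t,i}(\sigma) - \tfrac{2\gamma(\sigma)}{c} - \beta$, which is the claim. I do not anticipate any genuine obstacle here: the only thing worth flagging explicitly is the $\sigma$-independence of $g_t$ (so that the two optimality inequalities are comparing minimizers of the \emph{same} function shifted by different linear terms) and the nonnegativity of the perturbation entries used in simplifying $\|\sigma'\|_1$; everything else is the routine two-line ``paired optimality'' argument already carried out for FTPL.
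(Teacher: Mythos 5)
Your proposal is correct and matches the paper's own proof essentially verbatim: the paper likewise runs the paired approximate-optimality argument with $f_{1:t-1} + g_t$ in place of $f_{1:t-1}$ and then absorbs $\gamma(\sigma')$ via $\gamma(\sigma') \leq \gamma(\sigma) + \beta c$. The only cosmetic remark is that the triangle inequality $\|\sigma + c\e_i\|_1 \leq \|\sigma\|_1 + c$ already suffices, so nonnegativity of the perturbation entries is not actually needed for that step.
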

\begin{proof}
Let $f_{1:t}(\x) = \sum_{i = 1}^tf_i(\x)$ and $\sigma' = \sigma+c \e_i$. Moreover, let $\gamma(\sigma) = \alpha + \beta \|\sigma\|_{1}$ be the approximation error of the offline optimization oracle. From the approximate optimality of $\x_t(\sigma)$ we have
\begin{align*}
&f_{1:t-1}(\x_t(\sigma)) + g_{t}(\x_t(\sigma)) - \iprod{\sigma}{\x_t(\sigma)}\\
&\quad \leq f_{1:t-1}(\x_t(\sigma'))+ g_{t}(\x_t(\sigma')) - \iprod{\sigma}{\x_t(\sigma')} + \gamma(\sigma)\\
&\quad = f_{1:t-1}(\x_t(\sigma'))+ g_{t}(\x_t(\sigma')) - \iprod{\sigma'}{\x_t(\sigma')} + c\x_{t,i}(\sigma') + \gamma(\sigma)\\
&\quad \stackrel{(a)}{\leq} f_{1:t-1}(\x_t(\sigma))+ g_{t}(\x_t(\sigma)) - \iprod{\sigma'}{\x_t(\sigma)} + c\x_{t,i}(\sigma') + \gamma(\sigma) + \gamma(\sigma')\\
&\quad = f_{1:t-1}(\x_t(\sigma))+ g_{t}(\x_t(\sigma)) - \iprod{\sigma}{\x_t(\sigma)} + c\left(\x_{t,i}(\sigma') - \x_{t,i}(\sigma)\right) + \gamma(\sigma) + \gamma(\sigma'),
\end{align*}
where $(a)$ follows from the approximate optimality of $\x_t(\sigma')$. Combining the first and last terms in the above expression, we get \mbox{$\x_{t,i}(\sigma') \geq \x_{t,i}(\sigma) -\frac{2\gamma(\sigma)}{c} - \beta$}.
\end{proof}
We note that a similar argument can be used to show that $\bx_{t,i}(\alpha + c\e_i) \geq \bx_{t,i}(\alpha)$.
\begin{lemma}
%Let $\x_{t}(\sigma)$ be the prediction in iteration $t$ of FTPL, with random perturbation $\sigma$. Let $\e_i$ denote the $i^{th}$ standard basis vector and $\x_{t,i}$ denote the $i^{th}$ coordinate of $\x_t$.
 Suppose \mbox{$\norm{\x_{t}(\sigma) - \bx_{t+1}(\sigma)}_1 \leq 10d \cdot |\x_{t,i}(\sigma) - \bx_{t+1,i}(\sigma)|$.}   For $\sigma' = \sigma+100L_td \e_i$, we have
%\begingroup\makeatletter\def\f@size{10}\check@mathfonts
\begin{align*}
    \min\left(\x_{t,i}(\sigma'), \bx_{t+1, i}(\sigma')\right) \geq & \ \max\left(\x_{t,i}(\sigma), \bx_{t+1,i}(\sigma)\right) - \frac{1}{10}|\x_{t,i}(\sigma)-\bx_{t+1,i}(\sigma)|\\
    & - \frac{3(\alpha+\beta\|\sigma\|_{1})}{100L_td} -\beta.
\end{align*}
%\endgroup
\end{lemma}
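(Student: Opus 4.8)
The plan is to replay the proof of Lemma~\ref{lem:monotone2} essentially line for line, with three substitutions: the exact minimizer $\bx_{t+1}$ of $f_{1:t}(\x)-\iprod{\sigma}{\x}$ replaces $\x_{t+1}$; the residual loss $\Delta_t \defeq f_t - g_t$, which is $L_t$-Lipschitz w.r.t.\ $\ell_1$ by hypothesis, plays the role that $f_t$ played there; and $L$, $100Ld$ become $L_t$, $100L_td$ throughout. The one new wrinkle is that $\x_t(\sigma)$ and $\bx_{t+1}(\sigma)$ are now (approximate, resp.\ exact) optimizers of two \emph{different} objectives, $f_{1:t-1}+g_t-\iprod{\sigma}{\cdot}$ and $f_{1:t}-\iprod{\sigma}{\cdot}$; these differ exactly by $\Delta_t$, which is where the Lipschitz hypothesis enters.

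First I would establish the OFTPL analogue of the opening inequality block. Starting from the approximate optimality of $\x_t(\sigma)$ against the feasible point $\bx_{t+1}(\sigma)$ for the objective $f_{1:t-1}+g_t-\iprod{\sigma}{\cdot}$, I add $\Delta_t(\x_t(\sigma))$ to both sides. The left side collapses to $f_{1:t}(\x_t(\sigma))-\iprod{\sigma}{\x_t(\sigma)}$; on the right side, using $g_t=f_t-\Delta_t$ one rewrites $g_t(\bx_{t+1}(\sigma))+\Delta_t(\x_t(\sigma))=f_t(\bx_{t+1}(\sigma))+\bigl(\Delta_t(\x_t(\sigma))-\Delta_t(\bx_{t+1}(\sigma))\bigr)$ and bounds the parenthesized term by $L_t\norm{\x_t(\sigma)-\bx_{t+1}(\sigma)}_1\le 10L_td\,|\x_{t,i}(\sigma)-\bx_{t+1,i}(\sigma)|$ via Lipschitzness of $\Delta_t$ and the standing assumption. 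This yields $f_{1:t}(\x_t(\sigma))-\iprod{\sigma}{\x_t(\sigma)}\le f_{1:t}(\bx_{t+1}(\sigma))-\iprod{\sigma}{\bx_{t+1}(\sigma)}+(\alpha+\beta\|\sigma\|_1)+10L_td\,|\x_{t,i}(\sigma)-\bx_{t+1,i}(\sigma)|$, which is precisely the OFTPL counterpart of the first display in the proof of Lemma~\ref{lem:monotone2}.

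Next I would chain through the shifted objective with $\sigma'=\sigma+100L_td\,\e_i$, using $\iprod{\sigma'}{\cdot}=\iprod{\sigma}{\cdot}+100L_td\,(\cdot)_i$: invoke the (exact) optimality of $\bx_{t+1}(\sigma')$ against $\x_t(\sigma)$, then the optimality of $\bx_{t+1}(\sigma)$, combine with the previous display, and cancel common terms to get $\bx_{t+1,i}(\sigma')-\x_{t,i}(\sigma)\ge-\tfrac1{10}|\x_{t,i}(\sigma)-\bx_{t+1,i}(\sigma)|-\tfrac{3(\alpha+\beta\|\sigma\|_1)}{100L_td}-\beta$ (using exact minimizers actually kills some of the $\gamma(\sigma')=\alpha+\beta\|\sigma\|_1+100L_td\beta$ terms, so I would simply bound whatever I get by this slightly lossier expression, matching the stated constants). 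A mirror-image computation — start from the optimality of $\bx_{t+1}(\sigma)$, split $f_{1:t}=f_{1:t-1}+g_t+\Delta_t$ to pass onto the $\x_t$-objective, then invoke approximate optimality of $\x_t(\sigma')$ and $\x_t(\sigma)$ — gives the twin bound $\x_{t,i}(\sigma')-\bx_{t+1,i}(\sigma)\ge-\tfrac1{10}|\x_{t,i}(\sigma)-\bx_{t+1,i}(\sigma)|-\tfrac{3(\alpha+\beta\|\sigma\|_1)}{100L_td}-\beta$. Finally, the two plain monotonicity facts — the OFTPL version of Monotonicity~1 just stated above, applied to $\x_t$ with $c=100L_td$, and the exact-minimizer monotonicity $\bx_{t+1,i}(\sigma+c\e_i)\ge\bx_{t+1,i}(\sigma)$ remarked immediately below it — lower-bound $\x_{t,i}(\sigma')$ by $\x_{t,i}(\sigma)-\tfrac{2(\alpha+\beta\|\sigma\|_1)}{100L_td}-\beta$ and $\bx_{t+1,i}(\sigma')$ by $\bx_{t+1,i}(\sigma)$. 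Taking the weakest of these four lower bounds on $\x_{t,i}(\sigma')$ and $\bx_{t+1,i}(\sigma')$, and then the minimum over the two coordinates, gives exactly the claimed inequality.

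The main obstacle is purely the Step-1 bookkeeping that aligns the two objectives: one must add $\Delta_t$ at the right place so that the $g_t$ terms recombine into $f_t$ plus a $\Delta_t$-difference controlled by $L_t$ and the standing assumption on $\norm{\x_t(\sigma)-\bx_{t+1}(\sigma)}_1$. Once both objectives are written as $f_{1:t}-\iprod{\sigma}{\cdot}$ (resp.\ $f_{1:t-1}+g_t-\iprod{\sigma}{\cdot}$ in the mirror step), the remaining manipulations are literally those of Lemma~\ref{lem:monotone2}; there is no genuinely new difficulty, and the use of exact rather than approximate minimizers for $\bx_{t+1}$ only improves the constants.
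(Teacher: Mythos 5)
Your proposal is correct and matches the paper's proof essentially line for line: you replay the Lemma~\ref{lem:monotone2} argument with $\bx_{t+1}$ as exact minimizer of $f_{1:t}-\iprod{\sigma}{\cdot}$, use $L_t$-Lipschitzness of $\Delta_t = f_t - g_t$ (not $f_t$) to bridge the two objectives, and finish with the two cross-bounds plus the OFTPL Monotonicity~1 and exact-minimizer monotonicity for $\bx_{t+1}$, exactly as the paper does. Your observation that exact optimality of $\bx_{t+1}$ only tightens constants, which you then relax back to match the stated bound, is also how the paper handles it.
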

\begin{proof}
Let $f_{1:t}(\x) = \sum_{i = 1}^tf_i(\x)$ and let $\gamma(\sigma) = \alpha + \beta \|\sigma\|_{1}$ be the approximation error of the offline optimization oracle. From the approximate optimality of $\x_t(\sigma)$, we have
%\begingroup\makeatletter\def\f@size{10}\check@mathfonts
\begin{align*}
			&f_{1:t-1}(\x_t(\sigma)) - \iprod{\sigma}{\x_t(\sigma)}  + f_t(\x_t(\sigma)) \\
			&\quad \leq f_{1:t-1}(\bx_{t+1}(\sigma))+ g_{t}(\bx_{t+1}(\sigma)) - \iprod{\sigma}{\bx_{t+1}(\sigma)}\vspace{0.1in}\\
			& \quad \quad + f_t(\x_t(\sigma))- g_{t}(\x_t(\sigma)) + \gamma(\sigma)\\
			&\quad \stackrel{(a)}{\leq} f_{1:t-1}(\bx_{t+1}(\sigma))+ g_{t}(\bx_{t+1}(\sigma)) - \iprod{\sigma}{\bx_{t+1}(\sigma)} \vspace{0.1in}\\
			& \quad \quad + f_t(\bx_{t+1}(\sigma))- g_{t}(\bx_{t+1}(\sigma))  + L_t\norm{\x_{t}(\sigma)-\bx_{t+1}(\sigma)}_1 + \gamma(\sigma)\\
			&\quad \stackrel{(b)}{\leq} f_{1:t-1}(\bx_{t+1}(\sigma))+ g_{t}(\bx_{t+1}(\sigma)) - \iprod{\sigma}{\bx_{t+1}(\sigma)} \vspace{0.1in}\\
			&\quad \quad + f_t(\bx_{t+1}(\sigma))- g_{t}(\bx_{t+1}(\sigma)) + 10L_td|\x_{t,i}(\sigma) - \bx_{t+1,i}(\sigma)| + \gamma(\sigma),
\end{align*}
%\endgroup
where $(a)$ follows from the Lipschitz property of $f_t(\cdot)$ and $(b)$ follows from our assumption on $\norm{\x_{t}(\sigma) - \bx_{t+1}(\sigma)}_1$.
Next, from the optimality of $\bx_{t+1}(\sigma')$, we have
\begingroup\makeatletter\def\f@size{10}\check@mathfonts
\begin{align*}
			&f_{1:t-1}(\x_t(\sigma)) - \iprod{\sigma}{\x_t(\sigma)}  + f_t(\x_t(\sigma)) \\
			&\quad = f_{1:t-1}(\x_t(\sigma)) - \iprod{\sigma'}{\x_t(\sigma)}  + f_t(\x_t(\sigma)) + \iprod{100L_td\e_i}{\x_t(\sigma)}\\
			&\quad \geq f_{1:t-1}(\bx_{t+1}(\sigma')) - \iprod{\sigma'}{\bx_{t+1}(\sigma')} + f_t(\bx_{t+1}(\sigma')) + 100L_td\x_{t,i}(\sigma)\\
			&\quad = f_{1:t-1}(\bx_{t+1}(\sigma')) - \iprod{\sigma}{\bx_{t+1}(\sigma')} + f_t(\bx_{t+1}(\sigma')) + 100L_td(\x_{t,i}(\sigma)-\bx_{t+1,i}(\sigma'))\\
			&\quad \geq f_{1:t-1}(\bx_{t+1}(\sigma)) - \iprod{\sigma}{\bx_{t+1}(\sigma)} + f_t(\bx_{t+1}(\sigma)) +100L_td(\x_{t,i}(\sigma)-\bx_{t+1,i}(\sigma')),
\end{align*}
\endgroup
where the last inequality follows from the optimality of $\bx_{t+1}(\sigma)$. 
Combining the above two equations, we get
\[
\bx_{t+1,i}(\sigma')-\x_{t,i}(\sigma) \geq -\frac{1}{10}|\x_{t,i}(\sigma) - \bx_{t+1,i}(\sigma)|-\frac{\gamma(\sigma)}{100L_td}.
\]
A similar argument shows that 
\[
\x_{t,i}(\sigma')-\bx_{t+1,i}(\sigma) \geq -\frac{1}{10}|\x_{t,i}(\sigma) - \bx_{t+1,i}(\sigma)|-\frac{\gamma(\sigma)}{100L_td}.
\]
Finally, from the monotonicity property in Lemma~\ref{lem:monotone1} we know that
\[
\bx_{t+1,i}(\sigma')- \bx_{t+1,i}(\sigma) \geq 0 ,\quad \x_{t,i}(\sigma')- \x_{t,i}(\sigma) \geq -\frac{3\gamma(\sigma)}{100L_td}-\beta.
\]
Combining the above four inequalities gives us the required result.
\end{proof}
\noindent The rest of the proof relies on the monotonicity properties showed in the above two Lemmas to bound $\E{\|\x_t-\bx_{t+1}\|_1}$ and uses identical arguments as in the proof of Theorem~\ref{thm:ftpl}. 

\bibliography{local}

\begin{thebibliography}{15}
\providecommand{\natexlab}[1]{#1}
\providecommand{\url}[1]{\texttt{#1}}
\expandafter\ifx\csname urlstyle\endcsname\relax
  \providecommand{\doi}[1]{doi: #1}\else
  \providecommand{\doi}{doi: \begingroup \urlstyle{rm}\Url}\fi

\bibitem[Abernethy et~al.(2016)Abernethy, Lee, and
  Tewari]{abernethy2016perturbation}
Jacob Abernethy, Chansoo Lee, and Ambuj Tewari.
\newblock Perturbation techniques in online learning and optimization.
\newblock \emph{Perturbations, Optimization, and Statistics}, page 233, 2016.

\bibitem[Agarwal et~al.(2019)Agarwal, Gonen, and Hazan]{gonen2018learning}
Naman Agarwal, Alon Gonen, and Elad Hazan.
\newblock Learning in non-convex games with an optimization oracle.
\newblock In Alina Beygelzimer and Daniel Hsu, editors, \emph{Proceedings of
  the Thirty-Second Conference on Learning Theory}, volume~99 of
  \emph{Proceedings of Machine Learning Research}, pages 18--29, Phoenix, USA,
  25--28 Jun 2019. PMLR.
\newblock URL \url{http://proceedings.mlr.press/v99/agarwal19a.html}.

\bibitem[Cesa-Bianchi and Lugosi(2006)]{cesa2006prediction}
Nicolo Cesa-Bianchi and Gabor Lugosi.
\newblock \emph{Prediction, learning, and games}.
\newblock Cambridge university press, 2006.

\bibitem[Gao et~al.(2018)Gao, Li, and Zhang]{gao2018online}
Xiand Gao, Xiaobo Li, and Shuzhong Zhang.
\newblock Online learning with non-convex losses and non-stationary regret.
\newblock In \emph{International Conference on Artificial Intelligence and
  Statistics}, pages 235--243, 2018.

\bibitem[Goodfellow et~al.(2014)Goodfellow, Pouget-Abadie, Mirza, Xu,
  Warde-Farley, Ozair, Courville, and Bengio]{goodfellow2014generative}
Ian Goodfellow, Jean Pouget-Abadie, Mehdi Mirza, Bing Xu, David Warde-Farley,
  Sherjil Ozair, Aaron Courville, and Yoshua Bengio.
\newblock Generative adversarial nets.
\newblock In \emph{Advances in neural information processing systems}, pages
  2672--2680, 2014.

\bibitem[Hazan(2016)]{hazan2016introduction}
Elad Hazan.
\newblock Introduction to online convex optimization.
\newblock \emph{Foundations and Trends{\textregistered} in Optimization},
  2\penalty0 (3-4):\penalty0 157--325, 2016.

\bibitem[Hazan et~al.(2017)Hazan, Singh, and Zhang]{hazan2017efficient}
Elad Hazan, Karan Singh, and Cyril Zhang.
\newblock Efficient regret minimization in non-convex games.
\newblock \emph{arXiv preprint arXiv:1708.00075}, 2017.

\bibitem[Hutter and Poland(2005)]{hutter2005adaptive}
Marcus Hutter and Jan Poland.
\newblock Adaptive online prediction by following the perturbed leader.
\newblock \emph{Journal of Machine Learning Research}, 6\penalty0
  (Apr):\penalty0 639--660, 2005.

\bibitem[Kalai and Vempala(2016)]{kalai2016efficient}
Adam Kalai and Santosh Vempala.
\newblock Efficient algorithms for on-line optimization.
\newblock \emph{Journal of Computer and System Sciences}, 71, 2016.

\bibitem[Krichene et~al.(2015)Krichene, Balandat, Tomlin, and
  Bayen]{krichene2015hedge}
Walid Krichene, Maximilian Balandat, Claire Tomlin, and Alexandre Bayen.
\newblock The hedge algorithm on a continuum.
\newblock In \emph{International Conference on Machine Learning}, pages
  824--832, 2015.

\bibitem[Maillard and Munos(2010)]{maillard2010online}
Odalric-Ambrym Maillard and R{\'e}mi Munos.
\newblock Online learning in adversarial lipschitz environments.
\newblock In \emph{Joint European Conference on Machine Learning and Knowledge
  Discovery in Databases}, pages 305--320. Springer, 2010.

\bibitem[McMahan(2011)]{mcmahan2011follow}
Brendan McMahan.
\newblock Follow-the-regularized-leader and mirror descent: Equivalence
  theorems and l1 regularization.
\newblock In \emph{Proceedings of the Fourteenth International Conference on
  Artificial Intelligence and Statistics}, pages 525--533, 2011.

\bibitem[Rakhlin and Sridharan(2012)]{rakhlin2012online}
Alexander Rakhlin and Karthik Sridharan.
\newblock Online learning with predictable sequences.
\newblock \emph{arXiv preprint arXiv:1208.3728}, 2012.

\bibitem[Rakhlin and Sridharan(2013)]{rakhlin2013optimization}
Sasha Rakhlin and Karthik Sridharan.
\newblock Optimization, learning, and games with predictable sequences.
\newblock In \emph{Advances in Neural Information Processing Systems}, pages
  3066--3074, 2013.

\bibitem[Szegedy et~al.(2013)Szegedy, Zaremba, Sutskever, Bruna, Erhan,
  Goodfellow, and Fergus]{szegedy2013intriguing}
Christian Szegedy, Wojciech Zaremba, Ilya Sutskever, Joan Bruna, Dumitru Erhan,
  Ian Goodfellow, and Rob Fergus.
\newblock Intriguing properties of neural networks.
\newblock \emph{arXiv preprint arXiv:1312.6199}, 2013.

\end{thebibliography}

\end{document}